\title{	Learning Density-Based Correlated Equilibria for Markov Games}
\author{Libo Zhang$^{1,2,*}$, Yang Chen$^{2,*,\dag}$, Toru Takisaka$^1$,  Bakh Khoussainov$^1$, Michael Witbrock$^2$, \\ and Jiamou Liu$^{2,\dag}$}
\affiliation{\institution{$^1$ University of Electronic Science and Technology of China}
\country{China}}
\email{{takisaka, bmk}@uestc.edu.cn}
\affiliation{
  \institution{$^2$ The University of Auckland}
  \country{New Zealand}}
\email{lzha797@aucklanduni.ac.nz, {yang.chen,jiamou.liu,m.witbrock}@auckland.ac.nz}
\begin{abstract}
Correlated Equilibrium (CE) is a well-established solution concept that captures coordination among agents and enjoys good algorithmic properties. In real-world multi-agent systems, in addition to being in an equilibrium, agents' policies are often expected to meet requirements with respect to safety, and fairness. Such additional requirements can often be expressed in terms of the {\em state density} which measures the state-visitation frequencies during the course of a game. 
However, existing CE notions or CE-finding approaches cannot explicitly specify a CE with particular properties concerning state density; they do so implicitly by either modifying reward functions or using value functions as the selection criteria. The resulting CE may thus not fully fulfil the state-density requirements. In this paper, we propose {\em Density-Based Correlated Equilibria} (DBCE), a new notion of CE that explicitly takes state density as selection criterion. Concretely, we instantiate DBCE by specifying different state-density requirements motivated by real-world applications. To compute DBCE, we put forward the {\em Density Based Correlated Policy Iteration} algorithm for the underlying control problem. We perform experiments on various games where results demonstrate the advantage of our CE-finding approach over existing methods in scenarios with state-density concerns.
\end{abstract}
\keywords{Correlated Equilibrium; State Density; Markov Games}
\newcommand{\BibTeX}{\rm B\kern-.05em{\sc i\kern-.025em b}\kern-.08em\TeX}
\newcommand{\beq}[1][\vspace{0.3em}]{#1\begin{equation}}
\newcommand{\eeq}{\end{equation}}
\newcommand{\Qv}[0]{{{\bf Q}}}
\newcommand{\Amc}[0]{{{\mathcal{A}}}}
\newcommand{\Smc}[0]{{{\mathcal{S}}}}
\newcommand{\av}[0]{{{\bm a}}}
\newcommand{\rv}[0]{{{\bm r}}}
\newcommand{\piv}[0]{{\bm{\pi}}}
\newcommand{\regret}{\mathsf{reg}}
\newcommand{\pol}{\bold{\Pi}}
\newcommand{\real}{\mathbb{R}}
\newcommand{\nat}{\mathbb{N}}
\newcommand{\BFerr}{\mathsf{BFError}}
\newcommand{\Abmc}{\bm{\Amc}}
\newcommand{\MaxBF}{\mathsf{MaxBF}}
\newcommand{\MaxRegret}{\mathsf{MaxReg}}
\newtheorem{theorem}{Theorem}
\newtheorem{lemma}{Lemma}
\newtheorem{assumption}{Assumption}
\newtheorem{problem}{Problem}
\newtheorem{definition}{Definition}
\begin{document}


\pagestyle{fancy}
\fancyhead{}


\maketitle 

\renewcommand{\thefootnote}{}
\footnotetext{* Equal contributions\\
$\dag$ Corresponding author}
\section{Introduction}

A central question in the study of multi-agent systems is finding policies for rational game players to reach a particular form of equilibrium. 
A more recent trend in the investigation of this question is to incorporate policies' side effects \cite{klassen2022ai}. 
Indeed, in many real-world scenarios, it is difficult to define a reward function that captures all aspects of desired outputs of the agents. For example, a robotic system may gain a high reward by performing a specific risky manoeuvre that is less-than-desirable or engaging in actions that are seen as unethical \cite{schiff2020s}. When finding policies for agents, it is therefore not optimal to simply enable agents to achieve the highest possible rewards. Still, more importantly, the procedures must also satisfy other desirable properties, such as safety and fairness, that are not reflected by rewards. 

For simplicity, we formulate this type of problem as taking an $N$-player Markov game as input while asking for policies of agents that satisfy two types of requirements: 
\begin{enumerate}[leftmargin=*]
    \item {\bf\em Reward Requirement:} First, we expect that  the agents, being rational, will not unilaterally deviate from their policies due to utility concerns, and
    \item {\bf\em Non-reward Requirements:} Then, the policies must satisfy certain non-utility-based requirements that confine the runs of the multi-agent system.
\end{enumerate}

In this paper, we study an instance of the general problem above. (1) For the reward requirement, we specify a solution concept that factors into the possible coordination among agents. More specifically, we adopt \emph{correlated equilibrium} (CE) \cite{hart2001reinforcement} as the solution concept. Compared to Nash equilibrium (NE), widely adopted in this field \cite{holt2004nash}, CE does not require independence among agents and is suitable for a wider range of practical scenarios. Moreover, the set of CEs constitutes a convex polytope. Therefore, it is easy to compute via linear programming. Many adaptive procedures are shown to converge to CE rather than the more restricted NE \citep{hart2000simple,gordon2008no}. 
%
%
%
%
(2) For the non-reward requirements, we consider essential properties which can be loosely translated to, e.g., {\it ``certain situation should not take place''}, {\it ``certain situation should happen with a prescribed frequency''} and {\it ``two situations should happen with the same frequency''}.
%
%
These properties can be captured by examining a {\em run}, i.e., the sequence of states the agents are in during the game. 
More precisely, they are  {\it state-distribution requirements} that are defined in terms of the visitations to states in the game:
\begin{itemize}[leftmargin=*]
    \item \emph{Safety requirements.} These conditions demand that {\it certain bad states should not be visited}. Many industrial applications involve dangerous states that should never happen. Take, e.g., the low-power status of a robot system \cite{qin2021density}. Similar concerns can happen from an ethical perspective as well \cite{schiff2020s}.
    \item \emph{Frequency requirements.} These conditions demand that {\it certain states should be visited with a fixed frequency}. To generalise safety, the system may be expected to visit certain states with a certain proportion in the long term. For example, we may want a robotic system to run in high-efficiency mode 30\% of the time, and the rest 70\% time in normal mode.
    \item \emph{Fairness requirements.} These conditions demand that {\it two states should be visited with the same frequency}. One may also wish to balance the visitation frequency of two different states of the system for the sake of, e.g., system stability. For instance, if there are two charging stations for a team of uncrewed aerial vehicles, one may wish to balance their use rate to avoid unnecessary queuing. Or at a crossroads, traffic lights in two directions should be green with equal frequency.
\end{itemize}

The {\em state density function} may be employed 
 to measure the state visitation frequency when navigating the environment using a policy \cite{qin2021density}.
The function can express the aforementioned state-distribution requirements. However, so far, no work on CE or CE-finding algorithms has {\em explicitly} incorporated requirements defined by the state density function. 
On the other hand, methods have been introduced to {\em implicitly} express these non-reward requirements by imposing additional constraints on rewards.  Yet, these methods may not be sufficient to meet these desired requirements. In detail, the existing techniques fall into two categories:
\begin{enumerate}[leftmargin=*]
    \item {\it Risk-sensitive reward modification.} This method adds additional terms that tweak the reward structure, such as negative rewards for undesired states or imposing variance as risk-terms \cite{borkar2002q,mihatsch2002risk,geibel2005risk,shen2014risk}. This method has been preferred when the target is simple, and a tweaking strategy can be efficiently designed. However, it requires parameter fine-tuning as the optimal policy is sensitive to reward settings. When computing CE, this method changes the shape of the CE set of the original game. As a result, the optimal policy found in a modified game may not be a CE to the original game. Moreover, designing a reward modification for complex targets such as {\it value-targeting requirement} requires expert domain knowledge, which is challenging in real-world applications.
    \item {\it Constrained methods.} This method directly takes the subset of policies by adding explicit constraints such as {\em constrained Markov games} \cite{altman2000constrained}. 
    However, it requires parameter fine-tuning because the threshold for constraints can directly impact the game's performance and feasibility. Before solving the game, the optimal solution is invisible to the designers so setting a correct threshold is challenging. Additionally, when computing  CE, the introduced constraints may reduce the size of the CE set of the original game.
\end{enumerate}
Both methods above may change the shape or size of the feasible CE set of the original game. Such changes are illustrated  in Fig.~\ref{fig:FeaSet}

\begin{figure}
 \centering
  \includegraphics[width=\columnwidth]{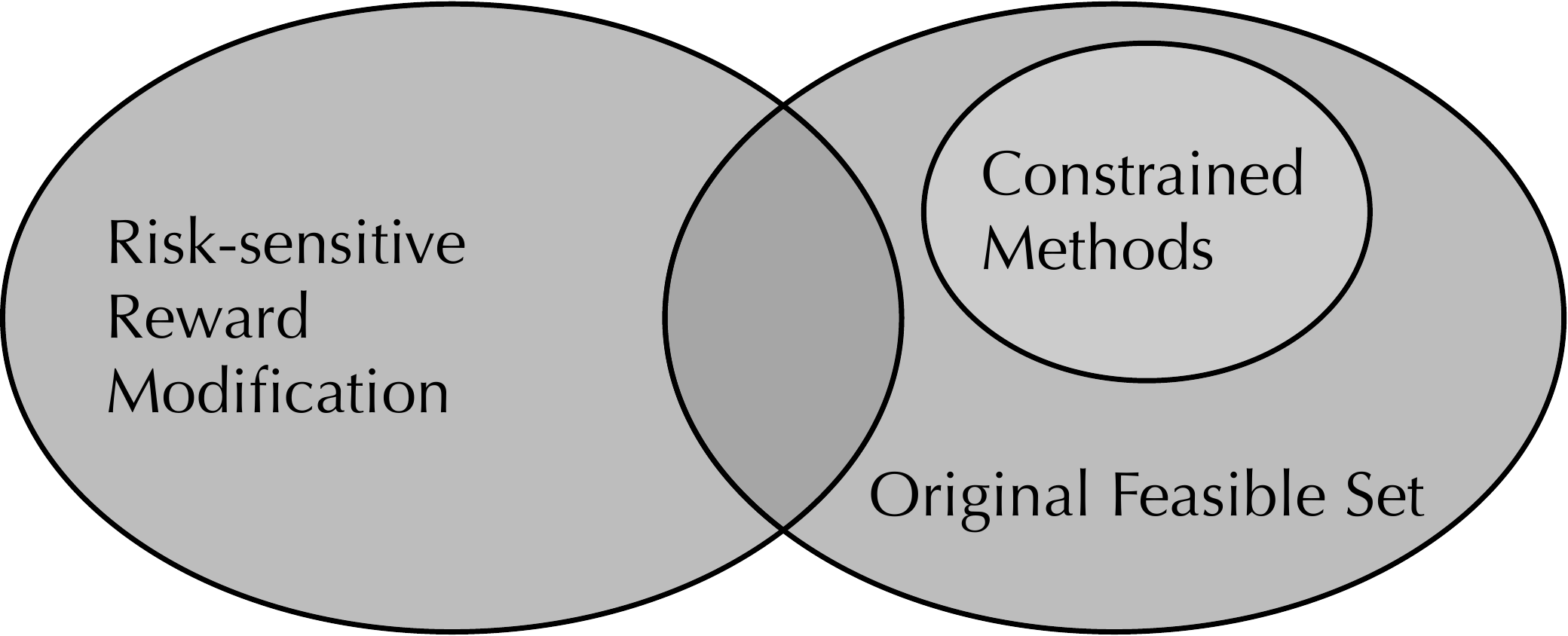}
\caption{This diagram shows the changes to the feasible CE sets by the two existing methods. A constrained method leads to a size-reduced feasible set, which can be empty if additional constraints are infeasible. The risk-sensitive reward modification generates a new game; thereby, the feasible set may be shifted. Moreover, an improper reward modification may cause an empty intersection between the feasible sets for the modified game and the original game.}
\label{fig:FeaSet}
\end{figure}

In this paper, we introduce \emph{Density-Based Correlated Equilibria} (DBCE) in the context of Markov games. By using density functions as a selection criterion, DBCE explicitly integrates state-distribution requirements (non-reward requirements) and reward requirements to a novel CE concept without suffering the issues above in the existing CE notions or CE-finding approaches.

However, having an equilibrium concept does not necessarily imply an effective way to find it. Directly computing a DBCE is intractable due to the inconsistency between the measurements of a policy's state density and cumulative rewards, preventing us from optimising the two in the same space. To settle this challenge, we employ the notion of {\em occupancy measure}, i.e., the cumulative state-action visitation frequency, in terms of which both the state density and cumulative rewards can be represented; it thus allows us to optimise the two in a unified fashion. This machinery gives rise to our proposed algorithm for computing DBCE named {\em Density Based Correlated Policy Iteration} (DBCPI). More specifically, DBCPI runs in such an iterative manner that alternates between the update of agents' policies and the occupancy measure: the policies are updated by finding a CE that is induced by the current occupancy measure and satisfies the non-reward requirements; the current occupancy is subsequently updated in accordance with the updated policy. Moreover, we provide a theoretical justification for DBCPI where the convergence conditions are given.

Our primary contributions are summarised as follows: 
\begin{enumerate}[leftmargin=*]
\item We propose a new CE concept for Markov games-- Density-Based Correlated Equilibria (DBCE) -- which exploits the state density function to explicitly capture non-reward requirements without changing the set of all feasible CEs.
\item To compute DBCE, we come up with Density-Based Correlated Policy Iteration (DBCPI). We show that under certain assumptions, this mechanism converges to a valid DBCE.
\item We test DBCPI against existing approaches on different simulated scenarios motivated by real-world applications. Experimental results demonstrate our machinery's advantage in finding CE with those mentioned above, additional non-reward requirements, i.e., safety, frequency and fairness.
\end{enumerate}

\section{Related Works}
\subsubsection*{Equilibrium Concepts.}

In multi-player games, especially non- cooperative games, solving a game amounts to finding an equilibrium. Nash Q-learning \cite{hu2003nash} extended the canonical Q-learning to general-sum Markov games to find Nash equilibrium. As for finding CEs, some work \cite{murray2007finding, Dermed2009solving} attempted to calculate the whole set of CEs by determining or approximating the boundary of the resulting expected-reward space, which has been shown to be a convex polytope. Alternatively, \citet{greenwald2003correlated} proposed a Q-learning-like algorithm to find an instance of CE in a Markov game rather than the whole set.
Another line of work focuses on exploiting the application value of CEs in real-world scenarios, such as \citet{yu2014multi} and \citet{han2007distributive} used CEs to coordinate equipment in industrial scenarios; \citet{jin2019dynamic} used CEs as the solution to a outsource task pricing problem. A series of work captures some particular properties by selecting a special subclass of CEs from the entire set. For instance, 
\citet{ortiz2007maximum} and \citet{ziebart2010maximum} choose the CE with the maximum policy entropy to ensure the uniqueness of the solution to a game.

\subsubsection*{Non-reward Requirements.}
In real-world applications, the non-reward requirements are inevitable. Some work adopts an implicit way to satisfy these requirements by modifying the reward functions  \cite{garcia2015comprehensive}. One popular method is to augment the reward function with risk-sensitive terms such as variance~\cite{markowitz1952portfolio} and exponential utility function~\cite{chung1987discounted}. Rather than implicit reward tweaking, logic instruction \cite{hasanbeig2018logically} that explicitly describes the goal is also considered as one method to modify the reward function. Recently, reinforcement learning algorithms with different risk-sensitive factors have been studied in various aspects \cite{borkar2002q,mihatsch2002risk,geibel2005risk,shen2014risk}. 
Some work along this line adds constraints to the learned policy in order to capture safety concerns: \citet{altman1993asymptotic} studied constrained MDP, and subsequently, Q-learning was extended to constrained MDP by \cite{gattami2019reinforcement}. Constrained method was later further extended to Markov Games in the multi-agent setting \cite{altman2000constrained,jiang2020finding,ge2020multi,altaian2007cons}. Among these constraints, the state density stands out as a particular one. Typical work includes \cite{geibel2005risk} that directly specified the unwanted states to avoid getting in, and \citet{qin2021density} proposed to use density functions as constraints to guide the finding of an optimal policy in reinforcement learning.

\section{Preliminaries}\label{sec:pre}
The set of all natural numbers, reals, and non-negative reals are denoted by $\nat$, $\real$, and $\real_{\geq 0}$, respectively. 
For a natural number $N$, the set $\{1,\ldots,N\}$ is denoted by $[N]$. 

\subsubsection*{Markov Games}  Markov games, also known as stochastic games, are extensions of Markov decision processes to the multi-agent setting, where a set of agents act in a stochastic environment, each aiming to maximise its cumulative rewards. 
\begin{definition}
    An $N$-agent {\em Markov game} is a tuple $$(\Smc, \{ \Amc_i \}_{i=1}^N, P, \{r_i\}_{i=1}^N, \eta, \gamma), \text{ where }$$  
    \begin{itemize}[leftmargin=*]
        \item $\Smc$ is the set of {\em states}, 
        \item $\Amc_i$ is the set of {\em actions} for the $i$th agent,  
        \item $P: \Smc \times \Abmc \to \Delta(\Smc)$ is the {\em transition function} that specifies the transition probability between two states given a {\em joint action} $\av = (a_1, \ldots, a_n)$, where $\Abmc = \times_{i=1}^N \Amc_i$ is the space of joint actions and $\Delta(\Smc)$ denotes the set of probability distributions over $\Smc$,
        \item $r_i: \Smc \times \Abmc \to \mathbb{R}$ is a {\em reward function} that determines agent $i$'s immediate reward of a joint action in a state,
        \item $\eta \in \Delta(\Smc)$ is the {\em initial distribution} of states,
        \item $\gamma\in (0,1)$ is a {\em discount factor}. 
    \end{itemize}
\end{definition}
Throughout, we use bold variables without subscripts to represent the concatenation of the corresponding variables for all agents and use the subscript $-i$ to denote all agents other than $i$, e.g., $\av = (a_1, \ldots, a_n) = (a_i, \av_{-i})$ denotes a {\em joint action} of all agents. 
\begin{definition}
    The agents' (stationary) {\em joint policy} is a function 
\[
\piv: \Smc \to \Delta(\Abmc)
\]
which specifies agents' probabilistic choice of actions according to the current state. The set of all joint policies is denoted by $\pol$.
\end{definition}
Each agent aims to find a policy to maximise its own {\em cumulative rewards} during the whole course of a game:
$\sum_{t=0}^\infty \gamma^t r_i(s^t, \av^t).$
For each agent $i$, the {\em expected return} of a state-joint action pair under a joint policy $\piv$ is defined as: 
\beq\label{eq:q}
	Q_i^{\piv} (s, \av) \triangleq  \mathbb{E} \left[ \sum_{t=0}^\infty \gamma^t r_i (s^t, \av^t) \bigg\vert s^0 = s, \av^0 = \av, P,  \piv \right].\nonumber
\eeq

\subsubsection*{Correlated Equilibria.}
A solution to a Markov is called an equilibrium that amounts to a joint policy where no agent has an incentive to unilaterally deviate to gain rewards.
Two canonical equilibrium concepts stand out concerning assumptions on different degrees of the independence among agents' policies. The well-known \emph{Nash equilibrium} (NE) \cite{fink1964equilibrium} requires independence among the agents, i.e., $\bm{\pi} = \times_{i=1}^N \pi_i$ where $\pi_i: \Smc \to \Delta(\Amc_i)$ denotes the policy of an individual agent. In comparison, \emph{correlated equilibrium} (CE) \cite{aumann1987correlated} generalises NE by capturing the coordination among agents, which is more suitable for multi-agent systems where agents coordinate their actions. 
Conceptually, agents are coordinated by a {\em correlation device} that recommends an action $a_i \in \Amc_i$ to each agent $i$, who is aware of all other agents' conditional distribution $\piv_{-i}(\av_{-i} \vert s,  a_i)$. 
To be in a CE, each agent has no incentive to disobey the recommendation, i.e., selecting an alternate action $a_i' \in \Amc_i$, called the {\em deviation action}. 

\begin{definition}\label{def:CE}
	A \emph{correlated equilibrium} (CE) for a Markov game is a joint policy $\piv$ that satisfies:
	\beq\label{eq:ce_constraints}
	 \forall i \in [N], s \in \Smc, a_i, a_i' \in \Amc_i 
	 , \quad 
	 \regret_\piv(s,i,a_i,a'_i) \leq 0.
    \eeq
	Here, the \emph{regret} $\regret_\piv(s,i,a_i,a'_i)$  embodies the expected reward gain of shifting to a deviation action:
	\begin{equation*}
	    \regret_\piv(s,i,a_i,a'_i) \triangleq \mathbb{E}_{\av_{-i} \sim \piv_{-i}(\cdot \vert s, a_i)} \left[ Q_i^{\piv}(s, a_i', \av_{-i}) -  Q_i^{\piv}(s, a_i, \av_{-i}) \right].
	\end{equation*}
\end{definition}


The general existence of NE \citep{fink1964equilibrium} implies the existence of CE. CE has nicer mathematical properties than NE in the sense that the constraints in Eq.~\eqref{eq:ce_constraints} define an $N$-dimension polytope in agent's expected returns while the set of NE consists of isolated points \citep{neyman1997correlated} in the polytope. Consequently, the set of CEs for normal-form games (equivalent to one-shot Markov games) can be derived using linear programming as Eq.~\eqref{eq:ce_constraints} is a system of linear inequalities. 
Still, exactly computing CE for Markov games is generally intractable due to two reasons: (i) the constraints of CE turn to non-linear inequalities because both $Q$ and $\piv$ are unknown in Eq.~\eqref{eq:ce_constraints}; (ii) the number of corners of the CE polytope grows exponentially with the horizon increases \citep{ziebart2010maximum}.


\subsubsection*{Density Functions.} 

A {\em density function} \citep{rantzer2001dual} $\rho: \Smc \to \mathbb{R}_{\geq 0}$
measures the visitation frequency of states when navigating the environment with a policy. Formally, for an infinite-horizon Markov game with its initial distribution $\eta$, discounted factor $\gamma$, the density function under a joint policy $\piv$ is defined as
\beq\label{eq:density function}
	\rho^\piv(s) \triangleq \sum_{t=0}^\infty \gamma^t \Pr(s^t = s \vert \piv, s^0 \sim \eta).\nonumber
\eeq

Notice that the density function can also be written in a recursive form as 
\beq\label{eq:density_recursive}
   \begin{aligned}
	&\rho^\piv(s) = \eta(s) + \piv(s,a)\gamma \sum_{s'\in \Smc} \sum_{\av \in \Abmc}  P(s \vert s', \av) \rho^\piv(s').\nonumber \\
	\end{aligned}
\eeq

\subsubsection*{Occupancy Measure.}
Similar to the density function, the \emph{occupancy measure} $\rho(s,a):\Smc \times \Abmc \to \mathbb{R}_{\geq 0}$ measures the visitation frequency of state-action pairs given a stationary policy. 
Formally, the occupancy measure $\rho^\piv$ under $\piv$ is defined as
\beq\label{eq:OccupancyMeas}
	\rho^\piv(s,\av) \triangleq \sum_{t=0}^\infty \gamma^t \Pr(s^t = s, \av^t = \av \vert \piv, s^0 \sim \eta).\nonumber
\eeq

We can calculate the density of a state by an equation $\rho^\piv(s) = \sum_{\av \in \Abmc} \rho^\piv(s,\av)$. 
Occupancy measure also has several properties 
useful in policy synthesis via optimisation \cite{syed2008apprenticeship}. 
First, 
a function $f:\Smc \times \Abmc \to \mathbb{R}$ is the occupancy measure under some stationary policy if and only if it satisfies the following \emph{Bellman flow (BF) constraints}:
\begin{equation}\label{eq:BFErr}
    \BFerr_f(s) = 0, \forall s \in \Smc \quad\text{AND}\quad
    f(s,\av) \geq 0, \forall s \in \Smc, \av \in \Abmc,
\end{equation}
where $\BFerr_f(s)$ denotes the {\em Bellman residual} with respect to the state-action visitation frequency:
\begin{equation*}
    \BFerr_f(s) = \sum_{\av \in \Abmc}f(s,\av) - \eta(s)-\gamma
	\sum_{s' \in \Smc}\sum_{\av \in \Abmc}
	P(s | s',\av)f(s',\av).
\end{equation*}
On the other direction, for an $f$ satisfying BF constraints, there is a unique stationary policy $\piv \in \pol$ associated with $f$ such that $f$ is the occupancy measure under $\piv$ (i.e., $\rho^\piv = f$); 
furthermore, such a policy can be constructed by 
\begin{align}
    \piv(s,\av)=f(s,\av) \Big/ \sum_{\av' \in \Abmc}f(s,\av'). \label{eq:occMeasToPolicy}
\end{align}


\subsubsection*{Non-reward Requirements}
In addition to reward requirements captured by equilibrium concepts, non-reward requirements have also drawn attention. Here,
we consider the three typical non-reward requirements: safety, frequency and fairness requirements. These requirements can be measured as the counts of occurrences of certain states in a {\em game trajectory}, i.e., a sequence of states generated by a policy in a Markov game. A finite trajectory with length $n+1$ is written as $\tau \triangleq s_0, s_1, \ldots, s_n$. We can formalise the above-mentioned three types of non-reward requirements in a trajectory-centric way:
\begin{itemize}[leftmargin=*]
    \item \textbf{\em Safety:} For a set of undesired states $S^*$, we expect the count of undesired states in the trajectory equals zero, $\sum_{i \in [0,n]}\mathbb{I}(s_i \in S^*)=0$ where $\mathbb{I}$ is the indicator function;
    \item \textbf{\em Frequency:} For a set of specific states $S^*$, we expect the count of such states occur in trajectory with a certain proportion $c$, $\sum_{i \in [0,n]}\mathbb{I}(s_i \in S^*) / (n+1) = c$
    \item \textbf{\em Fairness:} For 2 sets of states $S^*_1, S^*_2$, we expect the counts of such states from 2 sets to be equal in trajectory, $\sum_{i \in [0,n]}\mathbb{I}(s_i \in S^*_1)=\sum_{i \in [0,n]}\mathbb{I}(s_i \in S^*_2)$.
\end{itemize}

Intuitively, we demonstrate the three types of non-reward requirements in Fig.~\ref{fig:DemoReq} on a Markov game with two different states.

\begin{figure}
 \centering
  \includegraphics[width=\columnwidth]{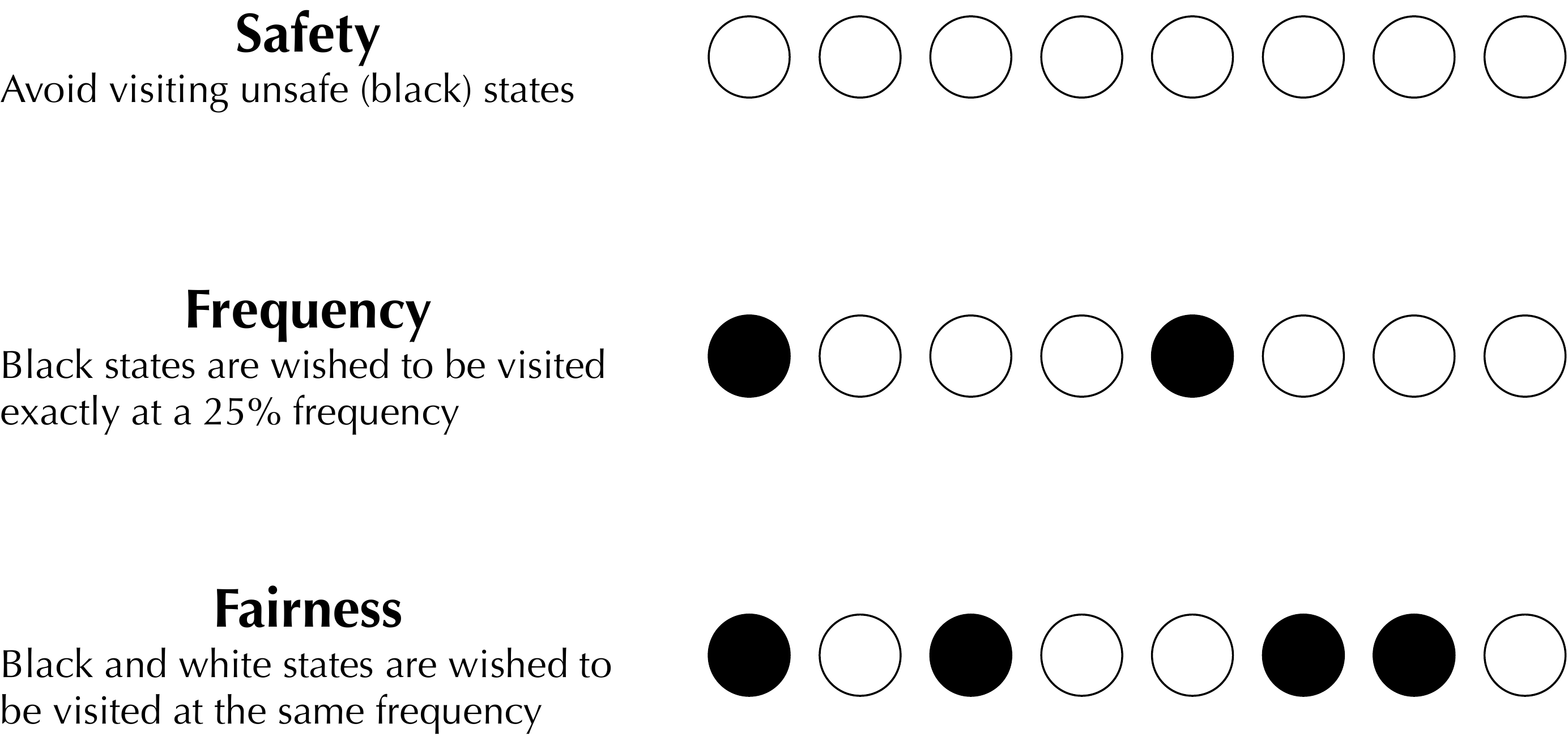}
\caption{Demonstrations of three types of non-reward requirements. Sequences of circles represent the trajectory, and black and white colors represent two different states.}
\label{fig:DemoReq}
\end{figure}

\section{Density-Based Correlated Equilibria}

In this section, we first propose the general definition of {\em Density-Based Correlated Equilibria} (DBCE). We then instantiate it by specifying the selection criterion as above mentioned three types of non-reward requirements. 

Recall that the motivation of DBCE is to find an equilibrium that can capture both agents' coordination and policies' side effects
that cannot be simply represented in terms of rewards but can instead be interpreted using the density function. 
To this end, we formalise DBCE by taking one or a set of density functions as the selection criterion to identify the subset of CEs that satisfy the desired non-reward requirements. In such a way, DBCE is defined as a solution to a constrained optimisation problem, where the density functions serve as the objective and the constraints enforce the conditions of being a CE.


\begin{definition}\label{def:DBCE}
	Let the following be given: 
	\begin{itemize}
            \item A Markov Game $(\Smc, \{ \Amc_i \}_{i=1}^N, P, \{r_i\}_{i=1}^N, \eta, \gamma)$;
	    \item A subset of states $\Smc^* = \{s_1,\ldots, s_m\} \subseteq \Smc$;
            \item A real-valued function $F:\mathbb{R}^m \to \mathbb{R}$;
            \item A function $\varphi(\piv) = F(\rho^\piv(s_1),$ $\ldots,\rho^\piv(s_m))$, which we call the \emph{density error} of $\piv$.
	\end{itemize}
	A joint policy $\piv$ is called an {\em ($F$-specified) density-based correlated equilibria} (DBCE) if it is a solution to the following constrained optimisation problem:
\begin{equation}\label{eq:CECond}
    \begin{aligned}
        \min_{\piv\in\pol} \varphi(\piv) \quad & \text{subject to}  \\
	\regret_\piv(s,i,a_i,a'_i) \leq 0, \forall i \in & [N], s \in \Smc, a_i, a_i' \in \Amc_i.
    \end{aligned}
\end{equation}
\end{definition}

We can use the value of the density error $\varphi(\piv)$ to indicate the quality of $\piv$ in terms of the state density. 
By choosing a suitable function $F$, we can instantiate the DBCE that captures a specific non-reward requirements. Here, we introduce the following specific DBCEs with respect to safety, frequency and fairness requirements: 
a DBCE is called a
\begin{itemize}
    \item \emph{Minimum Density CE (MDCE)} when we have\\ $\varphi(\piv) = \sum_{s \in \Smc^*}\rho^\piv(s)$; 
    \item {\em Frequency Matching CE (FMCE)} when we have\\ $\varphi(\piv) =  | \sum_{s \in \Smc^*}\rho^\piv(s) - c |$ for some $c\in\real_{\geq 0}$;
    \item {\em Minimum Density Gap  CE (MDGCE)} when we have\\ $\varphi(\piv) =  | \sum_{s \in \Smc_1}\rho^\piv(s) - \sum_{s \in S_2}\rho^\piv(s) |$ for $S_1, S_2 \subseteq \Smc^*$. 
\end{itemize}
Intuitively, we use MDCE, FMCE and MDGCE to represent CEs with requirements concerning safety, frequency, and fairness requirements, respectively. 
These instantiations indicate the general ability of DBCE to characterise the equilibria with some density-related properties, which are not yet able to be represented by other equilibrium notions.

\section{Density-Based Correlated Equilibria Finding}


This section is devoted to the introduction of a policy iteration algorithm to compute DBCE and the proof of its convergence under certain assumptions. For simplicity, our analysis centers around the Minimum Density CE (MDCE); but it applies to any other instance of DBCE.

\subsection{Density-Based Correlated Policy Iteration}


Recall that computing an MDCE requires solving the constrained optimisation problem defined in Eq.~\eqref{eq:CECond} where $\varphi(\piv) = \sum_{s \in \Smc^*}\rho^\piv(s)$. However, directly solving it is intractable because the density functions in the objective and the expected return in the constraints are defined in two different spaces; this prevents us from optimising the density-related objective whilst satisfying rewards-related constraints in a unified fashion. We thus ask for a way to unify the representations of the state density and expected return. Fortunately, we observe that both the state density and expected return can be rewritten in terms of the occupancy measure introduced in Sec.~\ref{sec:pre}. We can thereby simultaneously control the two by maintaining a single variable, rather than in two separate spaces.

We next show how to derive an equivalent yet tractable form of the original constrained optimisation problem. We first rewrite the constraints of Eq.~\ref{eq:CECond} by occupancy measure, $\regret'_f(s,i,a_i,a'_i)$, which is defined as follows:
\[
\sum_{\av_{-i}} f(s,a_i,\av_{-i}) \left[ Q_i^{\piv}(s, a_i', \av_{-i}) -  Q_i^{\piv}(s, a_i, \av_{-i}) \right].
\]

This is equivalent to $\regret'_{\rho^\piv}(s,i,a_i,a'_i)$, shown via Eq. (\ref{eq:occMeasToPolicy}).

With the relationship between the occupancy measure and the density function, the objective function $\varphi(\piv)$ can also be rewritten as $\varphi'(f)$, where 
\begin{align}
\varphi'(f) \triangleq  F\bigl(\sum_\av f(s_1,\av),\ldots,\sum_\av f(s_m,\av)\bigr). \label{eq:varphiP}
\end{align}
Recall that the density function can be rewritten as the sum of occupancy measures within one state: $\rho^\piv(s) = \sum_{\av \in \Abmc} \rho^\piv(s,\av)$, the new objective function becomes equivalent to the original objective function. So far, we achieve the consistency between non-reward requirements and reward requirements by occupancy measure. By the properties of occupancy measure discussed in the previous section, we recast the problem as follows:

\begin{problem}\label{ProbForm2}
\(
\displaystyle 
\min_{f:\Smc\times\Abmc \to \mathbb{R}} 
\sum_{s \in S^*}\sum_{\av \in \Abmc}f(s,\av) 
\)
\quad {\rm subject to} 
\begin{align}
		\regret'_f(s,i,a_i,a'_i) 
			&\leq 0, &&	\forall i \in [N], s \in \Smc, a_i, a_i' \in \Amc_i; \label{eq:modifiedCECond}\\
			\BFerr_f(s) 
			&= 0, && \forall s \in \Smc; \label{eq:BFErr2}\\
            f(s,\av) 
            &\geq 0, && \forall s \in \Smc, \av \in \Abmc. \label{eq:nonneg2}
\end{align}
\end{problem}

Here, the Bellman flow constraints (\ref{eq:BFErr2}) and (\ref{eq:nonneg2}) enforce $f$ to be the occupancy measure under some $\piv \in \pol$.
For such an $f$, the new objective function $\sum_{\av \in \Abmc}f(s^*,\av)$ is equal to $\rho^\piv(s^*)$, 
and (\ref{eq:modifiedCECond}) enforces $\piv$ to be a CE.
Due to the one-to-one correspondence between occupancy measures and stationary policies, a solution of Prob.~\ref{ProbForm2} is the occupancy measure under a solution to the original problem. 

However, Prob.~\ref{ProbForm2} is still difficult to solve directly because both $f$ and $Q$ (involved in $\regret'_f$) are unknown. 
We introduce an iterative approach to handle the problem that we call {\em Density-Based Correlated Policy Iteration} (DBCPI). It alternates between: (i) {\em policy evaluation}: estimating $Q$ values according to the current policy; and (ii) {\em policy improvement}: computing a DBCE under the current $Q$ function. More formally, let $t$ denote the index of iterations. At each iteration, $\Qv^t = \{Q^t_i\}_{i\in [N]}$ defines a {\em stage game} with constant $Q$ values.
Define $\regret_f^t(s,i,a_i,a'_i)$ as follows:
\[
\sum_{\av_{-i}} f(s,a_i,\av_{-i}) \left[ Q_i^t(s, a_i', \av_{-i}) -  Q_i^t(s, a_i, \av_{-i}) \right].
\]
By substituting $\regret'_f$ in Prob.~\ref{ProbForm2} with $\regret^t_f$, the stage game $\Qv^t$ is now tractable to solve 
using linear programming. After deriving the DBCE $\piv^t$ of the current stage game $\Qv^t$, we head back to update $Q$ functions and derive $\Qv^{t+1}$. The pseudocode is presented in Alg.~\ref{alg:DBCQ}.

\begin{algorithm}
   \caption{Density-Based Correlated Policy Iteration}\label{alg:DBCQ}
\begin{algorithmic}[1]
   \STATE {\bf Input:} A Markov game 
$(\Smc, \Abmc, P, \{r_i\}_{i=1}^N, \eta, \gamma)$.
   \STATE {\bf Initialisation:} $Q_i$ for each $i\in [N]$, learning rate $\alpha$
    \STATE $\piv(s,\av) \gets f(s,\av) / \sum_{\av' \in \Abmc}f(s,\av')$ 
   \FOR{each iteration}\label{line:pt1}
   \STATE $f \gets$ (solution to  Prob.~\ref{ProbForm2} with $\{Q_i\}_{i\in [N]}$)\label{line:pt5}
   		\STATE $\piv(s,\av) \gets f(s,\av) / \sum_{\av' \in \Abmc}f(s,\av')$
   		\WHILE{Not converge}\label{line:pt3}
       		\STATE Initialise state $s \in \Smc$
       		\STATE Observe transition $(s,\av,\rv,s')$
       		\FOR{each $i \in [N]$}
       		    \STATE $V_i(s') \gets \sum_{\av' \in \Amc}{\piv(s',\av') Q_i(s',\av')}$
       		    \STATE $Q_i(s,\av) \gets (1-\alpha)  Q_i(s,\av) + \alpha  (r_i+\gamma  V_i(s'))$
       		\ENDFOR
       		\STATE Decay $\alpha$
   		\ENDWHILE\label{line:pt4}
   \ENDFOR\label{line:pt2}
   \STATE {\bfseries Output:} A joint policy $\piv$, and $\varphi'(f)$ as the error of $\piv$.
\end{algorithmic}
\end{algorithm}

   

\subsection{Convergence Analysis}
We next prove 
that $\Qv^t$ converges to the $Q$ values under a DBCE as Alg.~\ref{alg:DBCQ} is applied. 
We begin by introducing the following useful technical assumptions.

\begin{assumption}\label{assump:visit}
Each state $s \in \Smc$ and action $a_i \in \Amc_i$ for all $i \in [N]$ are visited infinitely often.
\end{assumption}

\begin{assumption}\label{assump:reward}
The reward is bounded by some constant.
\end{assumption}

\begin{assumption}\label{assump:learning_rate}
The learning rate $\alpha_t$  satisfies the following conditions: $0 \leq \alpha_t < 1$ $\forall t$, $\sum_t \alpha_t = \infty$ and $\sum_t \alpha_t^2 < \infty$.
\end{assumption}

We also need the following lemma that guarantees the policy estimation procedure after solving each stage game can converge to  $Q$ values under $\piv^t$.
\begin{lemma}[\cite{szepesvari1999unified}]\label{lem:converge}
Let $\mathbb{Q}$ be the space of all $Q$ functions. Under 
Assumption~\ref{assump:visit}-\ref{assump:learning_rate}, 
the iteration defined by the following converges to $Q^\piv$ with probability 1: 
\begin{equation*}
\begin{aligned}
    Q^{t+1}(s,\av) &= (1-\alpha)  Q^t(s,\av)\\
    &+ \alpha_t  \Big(r(s',\av)+\gamma \sum_{\av}{\piv(\av \vert s') Q^t(s',\av)}\Big).
\end{aligned}
\end{equation*}
\end{lemma}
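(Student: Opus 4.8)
The statement to prove is Lemma~\ref{lem:converge}, which asserts convergence (with probability 1) of the stochastic approximation iteration for $Q$ under a fixed policy $\piv$. Since the lemma is cited from \cite{szepesvari1999unified}, the expected proof is a reduction to the general stochastic-approximation framework of that reference rather than an ad hoc argument.

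\medskip

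\noindent\textbf{Proof plan.}
The plan is to recognise the iteration as an instance of the generalized asynchronous stochastic approximation scheme analysed in \cite{szepesvari1999unified}, and verify its hypotheses. First I would write the iteration in the canonical form $Q^{t+1}(x) = (1-\alpha_t(x))Q^t(x) + \alpha_t(x)\big(H Q^t\big)(x) + \alpha_t(x)\,w_t(x)$, where $x = (s,\av)$ ranges over the finite set $\Smc\times\Abmc$, the operator $H$ is the fixed-policy Bellman (evaluation) operator
\[
(HQ)(s,\av) = \mathbb{E}_{s'\sim P(\cdot\mid s,\av)}\Big[r(s,\av) + \gamma\sum_{\av'}\piv(\av'\mid s')Q(s',\av')\Big],
\]
and $w_t$ is the zero-mean noise coming from sampling the successor state $s'$ (and the observed reward, if random). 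Second I would check the four standard conditions: (i) the step-sizes $\alpha_t$ satisfy the Robbins--Monro conditions $\sum_t\alpha_t=\infty$, $\sum_t\alpha_t^2<\infty$ with $0\le\alpha_t<1$ --- this is exactly Assumption~\ref{assump:learning_rate}, together with Assumption~\ref{assump:visit} to ensure every state--action pair is updated infinitely often so the asynchronous step-size bookkeeping goes through; (ii) the noise $w_t$ is conditionally zero-mean given the history, with conditional variance bounded by $C(1+\|Q^t\|^2)$ --- this follows since $\Smc$ and $\Abmc$ are finite, the reward is bounded (Assumption~\ref{assump:reward}), and $\piv$ is a fixed probability distribution, so $w_t$ is a bounded (hence sub-Gaussian / finite-variance) martingale-difference sequence; (iii) $H$ is a contraction (or at least a pseudo-contraction) in the sup-norm with modulus $\gamma<1$ and fixed point $Q^\piv$ --- this is the classical computation $\|HQ_1 - HQ_2\|_\infty \le \gamma\|Q_1-Q_2\|_\infty$, using $\sum_{\av'}\piv(\av'\mid s')=1$ and $\gamma\in(0,1)$; (iv) $Q^\piv$ is indeed the fixed point of $H$, i.e. $HQ^\piv = Q^\piv$, which is just the Bellman evaluation equation. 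Third, I would invoke the convergence theorem of \cite{szepesvari1999unified} to conclude $Q^t \to Q^\piv$ with probability 1.

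\medskip

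\noindent\textbf{Key steps, in order.}
\begin{enumerate}[leftmargin=*]
    \item Cast the update in the abstract form $Q^{t+1} = (1-\alpha_t)Q^t + \alpha_t(HQ^t + w_t)$ over the finite index set $\Smc\times\Abmc$, identifying $H$ and $w_t$ explicitly.
    \item Verify the step-size / infinite-visitation conditions from Assumptions~\ref{assump:visit} and \ref{assump:learning_rate}.
    \item Verify that $w_t$ is a martingale-difference noise with bounded conditional second moment, using finiteness of the state--action space and Assumption~\ref{assump:reward}.
    \item Show $H$ is a $\gamma$-contraction in $\|\cdot\|_\infty$ and that its unique fixed point is $Q^\piv$.
    \item Apply the stochastic-approximation convergence theorem of \cite{szepesvari1999unified} to obtain almost-sure convergence $Q^t\to Q^\piv$.
\end{enumerate}

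\medskip

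\noindent\textbf{Main obstacle.}
None of the steps is deep; the only subtlety---and the main point requiring care---is the bookkeeping for the \emph{asynchronous} nature of the update: in Alg.~\ref{alg:DBCQ} only the currently visited pair $(s,\av)$ is updated at each inner step, so one must argue (via Assumption~\ref{assump:visit}) that every component is updated infinitely often and that the effective per-component step-sizes still satisfy the Robbins--Monro conditions, so that the general (as opposed to synchronous) form of the theorem in \cite{szepesvari1999unified} applies. A secondary technical point is establishing an almost-sure a priori bound on $\|Q^t\|_\infty$ (or invoking the version of the theorem that does not presuppose boundedness), which again follows from bounded rewards and $\gamma<1$ via a standard induction. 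Once these are noted, the statement follows directly from the cited result.
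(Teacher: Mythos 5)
Your proposal is correct and follows essentially the same route as the paper: the paper gives no standalone proof of Lemma~\ref{lem:converge} but simply invokes the general value-iteration/stochastic-approximation convergence theorem of \cite{szepesvari1999unified}, and your argument is precisely the verification of that theorem's hypotheses (asynchronous updates with Robbins--Monro step sizes via Assumptions~\ref{assump:visit} and~\ref{assump:learning_rate}, bounded martingale-difference noise via Assumption~\ref{assump:reward} and finiteness of $\Smc\times\Abmc$, and the $\gamma$-contraction of the fixed-policy Bellman evaluation operator whose unique fixed point is $Q^{\piv}$). Your attention to the asynchronous bookkeeping and the a priori boundedness of $Q^t$ is exactly the care the cited framework requires, so nothing in your outline diverges from the intended argument.
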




Now, we are ready to present our main theorem which shows that  DBCPI converges to a DBCE $Q$ function under assumptions.
\begin{theorem}\label{thm:convergence}
Under Assumption~\ref{assump:visit}-\ref{assump:learning_rate}, the $Q$ function iteratively updated in Alg.~\ref{alg:DBCQ} will converge to the one under a DBCE if for all $t$, $s\in \Smc$, and $i\in [N]$, the policy $\piv_t$ is recognised as the {\em global optimum} expressed as:
\begin{equation*}\label{eq:global-optimum}
\begin{aligned}
   \forall \piv' \in \pol, \quad \mathbb{E}_{\av\sim\piv^t}[Q^t_i(s,\av)] \geq \mathbb{E}_{\av\sim\piv'}[Q^t_i(s, \av)].\\
\end{aligned}
\end{equation*}
\end{theorem}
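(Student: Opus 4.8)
The plan is to reduce the convergence of $\Qv^t$ to a fixed-point argument in the style of generalized $Q$-learning (e.g., the framework behind Lemma~\ref{lem:converge} and the Nash/CE-$Q$ convergence proofs). First I would fix notation: at iteration $t$, DBCPI solves the stage game defined by $\Qv^t$, i.e., Prob.~\ref{ProbForm2} with $Q$ replaced by $Q^t$, obtaining an occupancy measure $f^t$ and its induced policy $\piv^t$ via Eq.~\eqref{eq:occMeasToPolicy}; then the inner loop runs the stochastic update of Lemma~\ref{lem:converge}, which under Assumptions~\ref{assump:visit}--\ref{assump:learning_rate} drives $Q$ to $Q^{\piv^t}$ with probability $1$. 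So morally the outer iteration is the deterministic map $\Qv^t \mapsto \Qv^{t+1}$ where $\Qv^{t+1} = Q^{\piv^t}$ and $\piv^t$ is the DBCE of the stage game $\Qv^t$.

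The core step is to define the operator $T$ acting on $\mathbb{Q}$ by $(T\Qv)_i(s,\av) = r_i(s,\av) + \gamma \sum_{s'} P(s'\mid s,\av)\, V_i^{\piv_\Qv}(s')$, where $\piv_\Qv$ is the global-optimum DBCE of the stage game defined by $\Qv$ and $V_i^{\piv_\Qv}(s') = \mathbb{E}_{\av\sim\piv_\Qv}[\Qv_i(s',\av)]$, and to show $T$ is a $\gamma$-contraction in the sup-norm $\|\Qv\|_\infty = \max_{i,s,\av}|Q_i(s,\av)|$. The contraction reduces, after peeling off the shared $r + \gamma P(\cdot)$ prefix, to a Lipschitz bound on the value-selection map $\Qv \mapsto V_i^{\piv_\Qv}(s)$: one must show $|V_i^{\piv_\Qv}(s) - V_i^{\piv_{\Qv'}}(s)| \le \max_{j,\av}|Q_j(s,\av) - Q'_j(s,\av)|$ for every $i,s$. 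This is exactly where the \emph{global optimum} hypothesis of the theorem is used: because $\piv_\Qv$ maximizes $\mathbb{E}_{\av\sim\piv}[Q_i(s,\av)]$ over all of $\pol$ simultaneously for each $i$ (the feasibility constraints of Prob.~\ref{ProbForm2} being nonempty, CE always exists), the selected value is $V_i^{\piv_\Qv}(s) = \max_{\piv\in\pol}\mathbb{E}_{\av\sim\piv}[Q_i(s,\av)]$, and the max of a family of functions is $1$-Lipschitz in the sup-norm of the family; the usual cross-comparison $V_i^{\piv_\Qv}(s) - V_i^{\piv_{\Qv'}}(s) \le \mathbb{E}_{\av\sim\piv_\Qv}[Q_i(s,\av)] - \mathbb{E}_{\av\sim\piv_\Qv}[Q'_i(s,\av)] \le \|\Qv-\Qv'\|_\infty$ (and symmetrically) then closes it. Hence $\|T\Qv - T\Qv'\|_\infty \le \gamma\|\Qv-\Qv'\|_\infty$, so by Banach's fixed-point theorem $T$ has a unique fixed point $\Qv^\star$.

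Next I would check that the fixed point $\Qv^\star$ is the $Q$ function of a DBCE: at $\Qv^\star$ we have $\Qv^\star = T\Qv^\star$, meaning $\Qv^\star_i(s,\av) = r_i(s,\av) + \gamma\sum_{s'}P(s'\mid s,\av)\mathbb{E}_{\av'\sim\piv^\star}[\Qv^\star_i(s',\av')]$ with $\piv^\star = \piv_{\Qv^\star}$ the DBCE of its own induced stage game; since $\piv^\star$ is a CE of the stage game $\Qv^\star$ and $\Qv^\star = Q^{\piv^\star}$ by the Bellman identity just written, $\piv^\star$ satisfies the CE regret constraints of Eq.~\eqref{eq:ce_constraints} and, because it solved Prob.~\ref{ProbForm2}'s objective, also the density-error minimization — i.e., $\piv^\star$ is a DBCE and $\Qv^\star = Q^{\piv^\star}$. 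Finally, combining the outer contraction with the inner convergence from Lemma~\ref{lem:converge}: the true update is $\Qv^{t+1} = T\Qv^t + \varepsilon^t$ where $\varepsilon^t$ is the estimation error from the finite inner loop, which vanishes a.s.; a standard stochastic-approximation / asynchronous-contraction argument (or, more simply, appeal to Lemma~\ref{lem:converge} again with the decaying $\alpha_t$ governing both loops jointly) gives $\Qv^t \to \Qv^\star$ with probability $1$.

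I expect the main obstacle to be the interaction between the two layers of convergence — the inner stochastic $Q$-update and the outer deterministic contraction — which needs to be handled either by a clean "contraction-plus-vanishing-noise" lemma or by folding everything into a single asynchronous stochastic-approximation scheme; a secondary subtlety is verifying that the DBCE selection is well-defined and returns a genuinely optimal (not merely CE) value at each stage, which is guaranteed precisely by the global-optimum hypothesis but should be stated carefully, including the non-emptiness of the feasible set of Prob.~\ref{ProbForm2}.
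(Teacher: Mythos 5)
Your route is genuinely different from the paper's: the paper argues via monotone policy improvement (under the global-optimum hypothesis, $\piv^{t+1}$ weakly dominates $\piv^t$ under $\Qv^{t+1}$ for every agent and state, so the evaluated values increase and, being bounded, converge), whereas you set up a fixed-point argument with a one-step operator $T$ and prove it is a $\gamma$-contraction. Your contraction step itself is sound: the global-optimum hypothesis makes the selected stage-game value equal to $\max_{\av}Q_i(s,\av)$, and the cross-comparison gives the $1$-Lipschitz property exactly as in the Nash-Q/CE-Q literature. However, there is a genuine gap where you glue this to the algorithm. Alg.~\ref{alg:DBCQ} is a \emph{policy iteration}: the inner loop runs Lemma~\ref{lem:converge} to convergence, so the outer map is $\Qv^t \mapsto \Qv^{t+1} = Q^{\piv^t}$, the \emph{full evaluation} of the stage-game DBCE policy, not the single backup $T\Qv^t$. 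Your claim that ``the true update is $\Qv^{t+1} = T\Qv^t + \varepsilon^t$ with $\varepsilon^t \to 0$ a.s.'' conflates the inner-loop stochastic error (which Lemma~\ref{lem:converge} does kill) with the structural discrepancy $Q^{\piv^t} - T\Qv^t = \gamma P^{\piv^t}\bigl(Q^{\piv^t} - \Qv^t\bigr)$, which has no reason to vanish along the iteration. Banach's theorem for $T$ therefore does not by itself yield convergence of the actual iterates; even in the single-agent case, greedy policy iteration is not analysed as a sup-norm contraction of the PI map (the standard bound only gives a factor $2\gamma/(1-\gamma)$), and its convergence proof is precisely the monotone-improvement argument the paper sketches. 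To close the gap you would either fold the improvement theorem back in (after the first outer step each $\Qv^t$ is a genuine policy value, the global-optimum hypothesis gives per-agent monotonicity, boundedness gives convergence, and your contraction then identifies the limit with the unique fixed point of $T$), or reformulate the algorithm itself as a stochastic value-iteration scheme before invoking the Szepesv\'ari--Littman machinery.

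A secondary, smaller issue (shared with the paper's own sketch): at the limit you conclude that $\piv^\star$ is a DBCE because it minimises $\varphi'$ subject to the stage-game regret constraints for $\Qv^\star$. Since $\Qv^\star = Q^{\piv^\star}$, this shows $\piv^\star$ is a CE of the Markov game and is density-optimal \emph{within the CE set of that stage game}; but the CE set of the Markov game consists of policies $\piv$ whose regret is measured with their own $Q^{\piv}$, which is a different feasible set, so optimality of $\varphi$ over all Markov-game CEs does not follow without further argument. The paper hedges by claiming only a ``feasible DBCE''; you should either weaken your conclusion in the same way or supply the missing comparison.
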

\begin{proof}[Proof sketch]
The basic idea is to show that the policy guided by DBCPI monotonically improves in terms of rewards. By Lemma~\ref{lem:converge}, for all $t$, after sufficient rounds of updates, we derive $Q$ functions $\Qv^{t+1} = \{Q_i^{t+1}\}_{i\in[N]}$ under $\piv^t$. By assumption, there always exists a  globally optimal policy for at each encountered stage game $\Qv^t$. As a result, every iteration the policy monotonically improves as the iteration progresses:
    \begin{equation*}
        \mathbb{E}_{\av\sim\piv^{t+1}}[Q^{t+1}_i(s,\av)] \geq \mathbb{E}_{\av\sim\piv^t}[Q^{t+1}_i(s, \av)],
    \end{equation*}
    for all $t$, $s\in \Smc$, and $i\in [N]$. This implies that after sufficient number of 
    iterations, the policy converges to a globally optimal one, so does the $Q$ function. By solving each stage game $\Qv^t$ using linear programming, all constraints in Prob.~\ref{ProbForm2} can be satisfied. At convergence, the policy is thus a feasible DBCE. 
\end{proof}

Although Thm.~\ref{thm:convergence} tells us that the convergence holds true under strong constraints on every stage game, in experiments we find the constraint is not necessary for DBCPI to converge. This fact is in accordance with the empirical analysis in \cite{hu2003nash,yang2018mean}.

\section{Experiments}

We seek to answer the following questions via experiments:

\subsubsection*{Q1} {\em Does our algorithm find a CE better than other approaches?} \label{RQ:ourAlgFindsCE}\\
We evaluate this by checking the following values 
upon the termination of Alg.~\ref{alg:DBCQ} after $K$ iterations: 
\begin{align*}
     \MaxRegret &\triangleq \max_{s,i,a_i,a'_i}\regret^{K+1}_f(s,i,a_i,a'_i), \\
     \MaxBF     &\triangleq \max_s |\BFerr_f(s)|.
\end{align*}
$\MaxRegret$ can be seen as a ``distance'' between $\piv$ and the CE-set: the larger the value is, the larger incentive there exists for some agent to deviate from $\piv$.
In particular, $\piv$ is a CE when this value is non-positive. 
$\MaxBF$ evaluates the soundness of the computation in line~\ref{line:pt5}: 
the larger the value is, the further $f$ deviates from $\rho^\piv$, i.e., the occupancy measure of $\piv$.
Such a deviation of $f$ implies that the value $\varphi'(f)$ is unreliable as the error of $\piv$. 

To evaluate the algorithms in this question, we focus on the $\MaxRegret$ and $\MaxBF$ values in all cases. The smaller the values are, the better the algorithms are. 9 extra MDCE tasks are carried out to compare the $\MaxRegret$ in modified games and original games for risk-sensitive reward modification.

\subsubsection*{Q2} {\em Does our policy generate desired trajectories?}\label{RQ:T-score}\\
We evaluate this by examining the patterns of individual trajectories under the policy computed by Alg.~\ref{alg:DBCQ} on all three requirements.

\subsubsection*{Q3} {\em What is the accuracy of our DBCPI compared to existing ones?}\label{RQ:D-score} \\
To make the comparison in accuracy, we compare the errors 
(i.e., the value $\varphi'(f)$ in the output of Alg.~\ref{alg:DBCQ}), and select a few instances from our data and illustrate them by plots.

\subsubsection*{Q4} {\em What is the convergence performance of our DBCPI on learning?}\label{RQ:AlgPerf}\\
We evaluate this by performing the iteration error plots of our DBCPI in the experiments.

\subsection{Experiment Setup}
\subsubsection{Game environments and tasks.}

\begin{figure}
 \centering
 \subfloat[FairGamble]{
  \includegraphics[width=0.3\columnwidth]{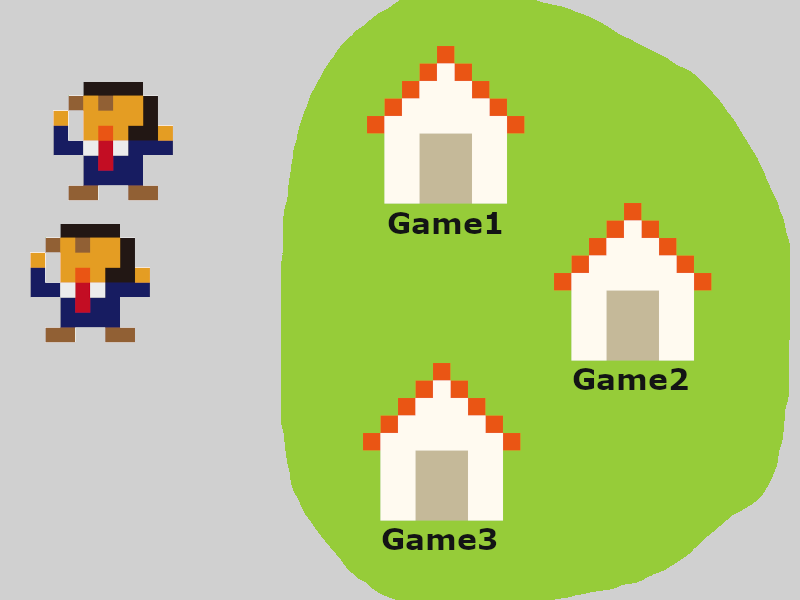}
  \label{fig:FairGamble}
 }
 \subfloat[Hunters]{
  \includegraphics[width=0.3\columnwidth]{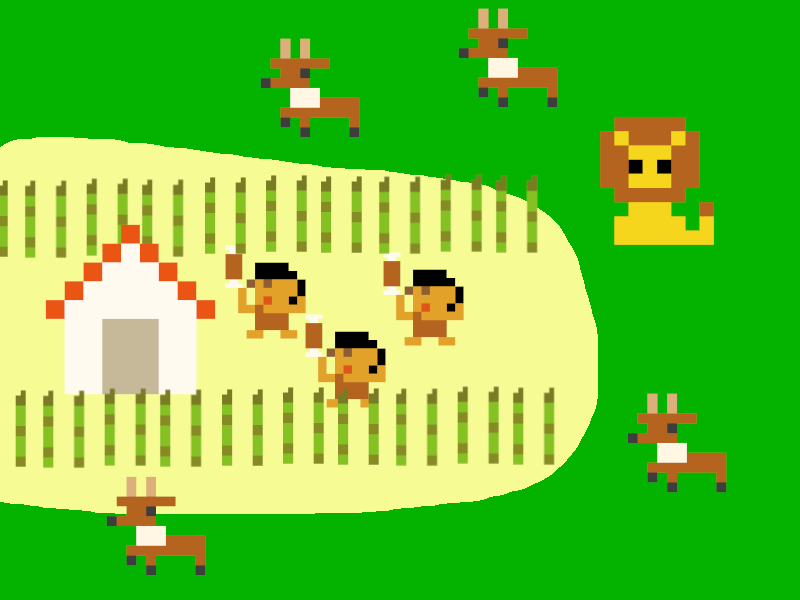}
  \label{fig:Hunters}
 }
 \subfloat[CaE]{
  \includegraphics[width=0.3\columnwidth]{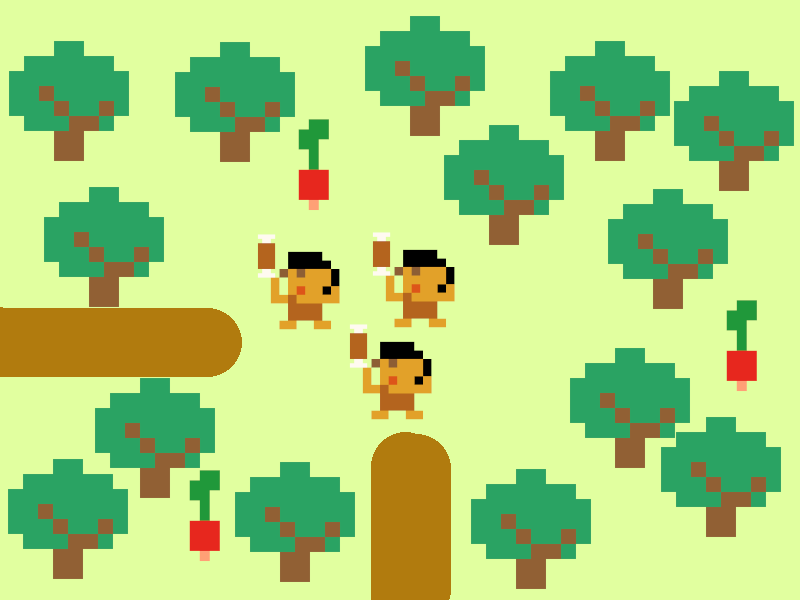}
  \label{fig:CaE}
 }
\caption{Screenshots of games.}
\label{fig:games}
\end{figure}
We consider three game models (Fair Gamble, Hunters, Collect and Explore); 
for each of them, we impose three different state-distribution requirements, which make nine instances of the input to Alg.~\ref{alg:DBCQ} in total. We have anonymously published animated demonstrations of our algorithm on these games, which are available at {\tt \url{ https://github.com/nanaralala/Density-based-Correlated-Equilibrium/}}. The screenshots are presented in Fig.~\ref{fig:games}. The descriptions of each game is shown below.
    \begin{enumerate}[leftmargin=*]
        \item {\it Fair Gamble}. 
         In this game, two gamblers play games with each other, and they choose from 3 different games. 3 games are extremely fair, so no matter what they do, the reward will be given randomly. Game 1 gives 0 rewards fairly; Game 2 gives 0.5 to a gambler and -0.5 to another; Game 3 gives 1 to a gambler and -1 to another.

        In each round, the gamblers choose a number from 0,1,2 and we compare the number to select which game they play. See Fig.~\ref{fig:FairGamble} for the explanation.

        We consider three state-related requirements, namely 
        \begin{enumerate}
        \item a safety requirement for gamblers to avoid game 3.
        \item a frequency matching requirement for gamblers to choose game 3 in 10\% of the time.
        \item a fairness requirement that demands game 1 and game 2 to have equal frequency.
        \end{enumerate} 
        
        \item {\it Hunters}. 
        In this non-cooperative game, 3 hunters live in one village. In each round, they are inside the village or outside the village, and they can choose between going hunting or guarding the village against the animals, see Fig.~\ref{fig:Hunters}. If one hunter goes hunting from the village, the hunter will get a high reward (1) and the rest of the hunters will get a low reward (0.1). If one hunter guards the village, all hunters will get the same mid-level reward (0.5). If one hunter is outside of the village and still stays hunting outside, we consider the behaviour is not safe enough, so the reward he gains becomes smaller (0.5), and the others will get a punishment reward (-0.5). Additionally, if there is less or equal than 1 hunter guarding the village, they will receive high punishment reward (-3).
        
        We consider three state-related requirements, namely 
        \begin{enumerate}
            \item a safety requirement that demands at least 2 hunters to stay in the village.
            \item a frequency matching requirement that requires less or equal to one agent  guarding the village 10\% of the time.
            \item a fairness requirement that demands that the frequency of hunter 1 goes hunting and the sum of hunter 2 and 3 go hunting to be equal.
        \end{enumerate} 
        \item {\it Collect and Explore} (CaE). 
        In this cooperative game, 3 agents are trapped in a forest, see Fig.~\ref{fig:CaE}. They can choose to explore the environment or collect some food nearby their accommodation. If more than one agent chooses to go out, we randomly choose one of them to go out, and the others will get no rewards. In each round, if existing agents go exploration, we add 1 to the reward; if any agent collects foods nearby, we add 0.3 to the reward. Since it’s a cooperative environment, we set the same reward for all agents.
        
        We consider three state-related requirements, namely 
        \begin{enumerate}
        \item a safety requirement for agent 1 to not go out.
        \item a frequency matching requirement that confines agent 1 to go out 10\% of the time.
        \item a fairness requirement that demands that the frequency of agent 1 go explore be equal to the sum of the frequencies that agent 2 and agent 3 go explore.
        \end{enumerate} 
    \end{enumerate}
    
\subsubsection{Baselines.}
    We implement \emph{utilitarian CE-Q}~\cite{greenwald2003correlated} with risk-sensitive reward modification and constrained methods. The detail is shown as follows:
    \begin{enumerate}[leftmargin=*]
        \item \emph{Risk-sensitive Reward modification (RM).}
        A negative constant $p < 0$ is added to the reward at $s\in S^*$. 
        We write RM-$p$ to denote the algorithm with a specific $p$.
        \item \emph{Constrained method (CM).}
        An additional constraint $\varphi'(f) \leq b$ is added (cf. eq.~(\ref{eq:varphiP})). 
        We write CM-$b$ to denote the algorithm with a specific $b$.
    \end{enumerate}
        RM is used as a baseline for the safety requirement only; we are not aware if there exists a canonical way to do that for frequency matching and fairness requirements. 
        CM is used for all state requirements we consider in the experiment.

\subsubsection{Implementation details.}
The iteration number of Alg.~\ref{alg:DBCQ} is set to 250. 
Parameters in Alg.~\ref{alg:DBCQ} are set to $\gamma = 0.99$ and $\alpha$ decays from 0.3 to 0.001.
We run the algorithm 3 times for each experiment environment, and the results are taken as the mean of 3 runs. 
In the program, the optimisation problem in line~\ref{line:pt5} of Alg.~\ref{alg:DBCQ} 
was solved using an optimizer in \cite{2020SciPy-NMeth}. 

\subsection{Experiment Results}
\begin{table}[!ht]
\caption{Comparisons on capabilities of found CE.}
			\label{tab:ExpResult}
\small
			\centering 
			\begin{tabular}{c  c  c  c  c  c}
				\toprule
				\multirow{3}{*}{Game} & \multirow{3}{*}{Metric} & \multirow{3}{*}{Method} & \multicolumn{3}{c}{Requirement}\\
				\cmidrule(r){4-6}
				& & & {\tt Safety} & {\tt Fairness} & {\tt Freq-10}\\
				\midrule 
				\multirow{12}{*}{FairGamble}
				&\multirow{4}{*}{Error}
				& DBCE & \textbf{1.225} & 32.967 & 8.683 \\
				&& CM-0.05 & 8.331 & \textbf{0.466} & \textbf{6.053} \\
				&& CM-5 & 9.65 & 8.814 & 5.668 \\
				&& RM-1.5 & 17.725 & --- & --- \\
				\cmidrule(r){2-6}
				&\multirow{4}{*}{$\MaxBF$}
				& DBCE & 0.464 & \textbf{0.13} & \textbf{0.521} \\
				&& CM-0.05 & 16.496 & 8.16 & 11.612 \\
				&& CM-5 & 18.054 & 7.781 & 6.623 \\
				&& RM-1.5 & \textbf{0.032} & --- & --- \\
				\cmidrule(r){2-6}
				&\multirow{4}{*}{$\MaxRegret$}
				& DBCE & 0.164 & \textbf{0.034} & \textbf{0.08} \\
				&& CM-0.05 & 0.174 & 0.172 & 0.22 \\
				&& CM-5 & 0.391 & 0.107 & 0.231 \\
				&& RM-1.5 & \textbf{0.11} & --- & --- \\
				\midrule 
				\multirow{12}{*}{Hunters}
				&\multirow{4}{*}{Error}
				& DBCE & 14.129 & 4.442 & 3.643 \\
				&& CM-0.05 & 2.313 & 2.283 & \textbf{0.05} \\
				&& CM-5 & 2.124 & \textbf{2.174} & 5 \\
				&& RM-1.5 & \textbf{0.828} & --- & --- \\
				\cmidrule(r){2-6}
				&\multirow{4}{*}{$\MaxBF$}
				& DBCE & \textbf{0} & \textbf{0} & 0.035 \\
				&& CM-0.05 & 0.005 & 0.001 & \textbf{0} \\
				&& CM-5 & 0.004 & 0.032 & \textbf{0} \\
				&& RM-1.5 & 0.001 & --- & --- \\
				\cmidrule(r){2-6}
				&\multirow{4}{*}{$\MaxRegret$}
				& DBCE & \textbf{0.044} & \textbf{0.061} & \textbf{0.037} \\
				&& CM-0.05 & 9.171 & 0.52 & 0.225 \\
				&& CM-5 & 0.479 & 1.18 & 0.18 \\
				&& RM-1.5 & 0.842 & --- & --- \\
				\midrule 
				\multirow{12}{*}{CaE}
				&\multirow{4}{*}{Error}
				& DBCE & 0.419 & \textbf{0.002} & 7.608 \\
				&& CM-0.05 & \textbf{0.242} & 0.05 & \textbf{1.174} \\
				&& CM-25 & 16.802 & 9.688 & 11.854 \\
				&& RM-0.5 & 23.838 & --- & --- \\
				\cmidrule(r){2-6}
				&\multirow{4}{*}{$\MaxBF$}
				& DBCE & \textbf{0} & \textbf{0} & \textbf{0} \\
				&& CM-0.05 & 0.023 & \textbf{0} & \textbf{0} \\
				&& CM-25 & 0.004 & 1.151 & \textbf{0} \\
				&& RM-0.5 & 0.023 & --- & --- \\
				\cmidrule(r){2-6}
				&\multirow{4}{*}{$\MaxRegret$}
				& DBCE & \textbf{0.002} & \textbf{0.001} & 0.003 \\
				&& CM-0.05 & 1.266 & 0.002 & 0.174 \\
				&& CM-25 & 0.419 & 6.695 & \textbf{0.001} \\
				&& RM-0.5 & 1.434 & --- & --- \\
				\bottomrule
			\end{tabular}
		\end{table}

The following results and discussions answer questions asked at the beginning of this section.

\subsubsection*{Q1.} The results for this question are found in the $\MaxBF$ and $\MaxRegret$ rows in Tab.~\ref{tab:ExpResult}. 
    DBCE has the smallest $\MaxBF$ value in 7 of 9 cases, which means DBCE performs better in finding occupancy measures and mapping them to policies. DBCE also has the smallest $\MaxRegret$ value in 7 of 9 cases, which means DBCE performs better in finding policies in the CE-set. In conclusion of this observation, our selection criteria perform better in finding CE policies in these experiments.
    In 9 extra MDCE runs, 7 of 9 runs show that the distance to CE-set in the original game is longer than the distance to CE-set in the modified game, which indicates the disadvantage of the RM method.
    
\begin{figure*}[!htbp]
 \centering
 \subfloat[FairGamble-MDCE]{
  \includegraphics[width=0.65\columnwidth]{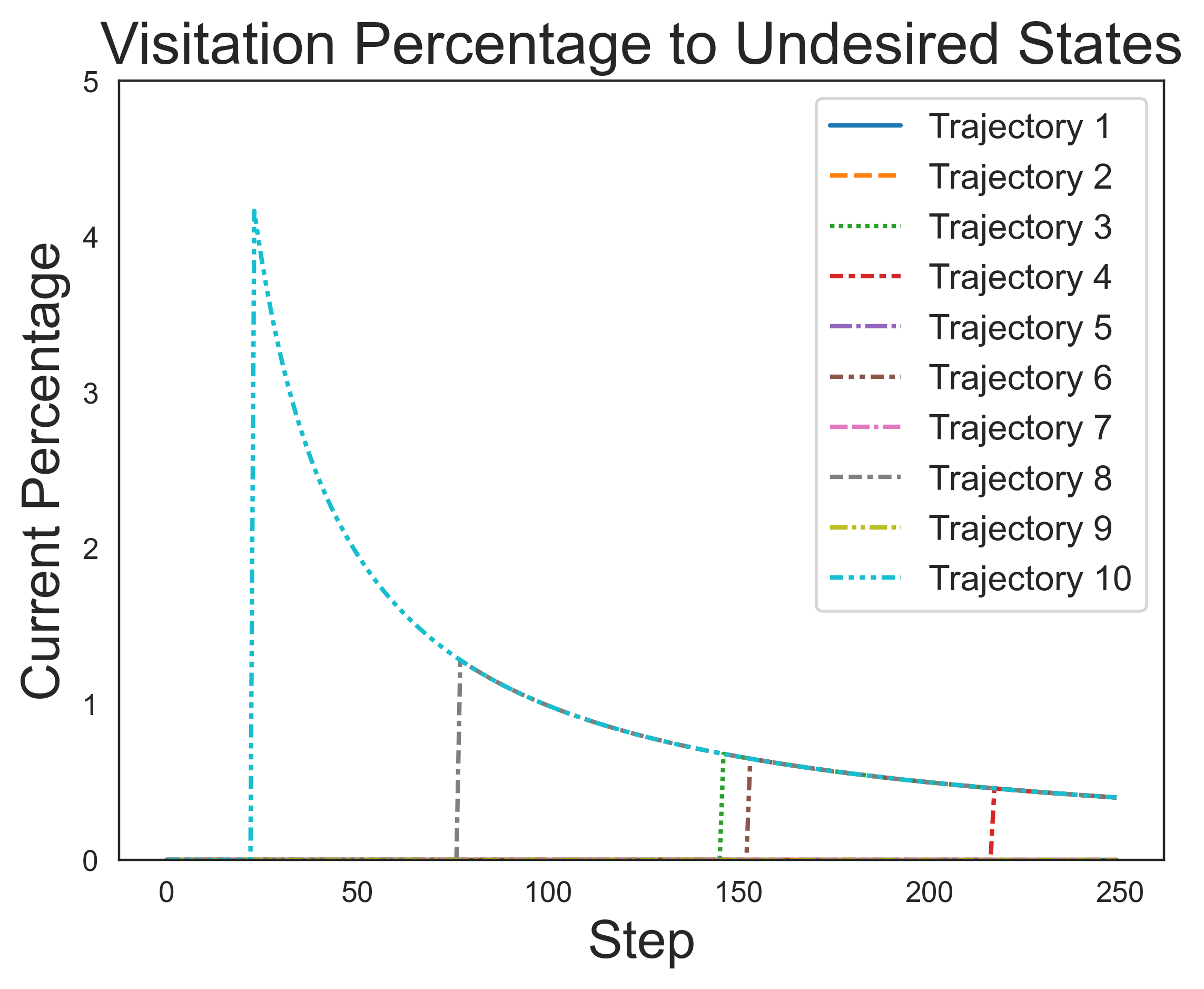}
  \label{fig:MDCETrace}
 }
 \subfloat[Hunters-Freq-10]{
  \includegraphics[width=0.65\columnwidth]{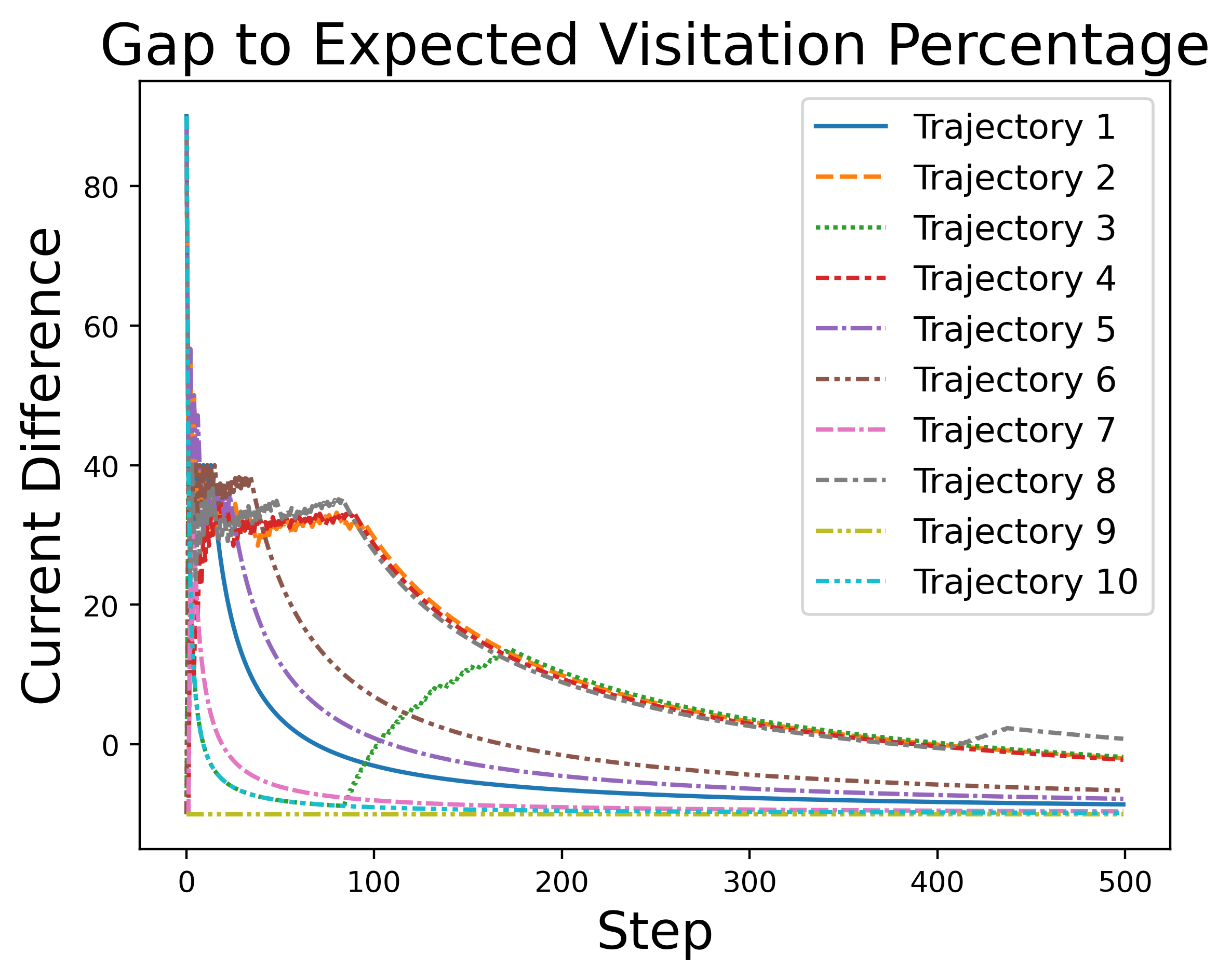}
  \label{fig:ValTrace}
 }
 \subfloat[CaE-Fairness]{
  \includegraphics[width=0.65\columnwidth]{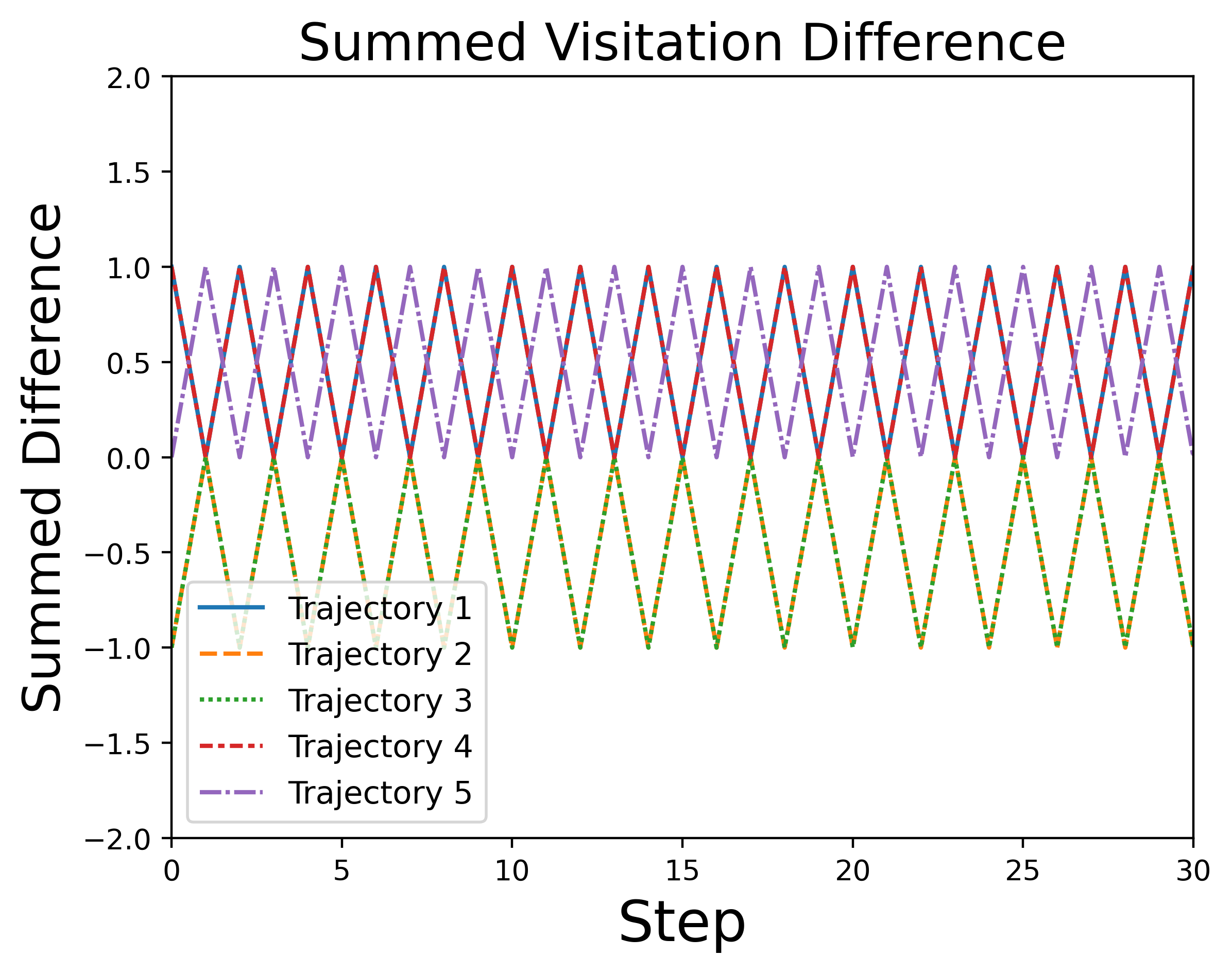}
  \label{fig:Gap}
 }
\caption{Results for trajectory performance. Close to 0 means desired trajectory.}
\label{fig:RQ2}
\end{figure*}

\subsubsection*{Q2.} The result for this question is found in Fig.~\ref{fig:MDCETrace} for MDCE, Fig.~\ref{fig:ValTrace} for FMCE and Fig.~\ref{fig:Gap} for MGDCE. Each plot includes a few trajectories generated by our algorithm, we estimate the gap between the trajectories and the expected visitation counts, so the closer to 0 the value is, the better the trajectory is.
In Fig.~\ref{fig:MDCETrace}, we observe the trajectories have the expected property, which is the count of visitations to undesired states is low.
In Fig.~\ref{fig:ValTrace}, we observe the trajectories generated by our algorithm gradually converge to the expected proportion of visitation. Few trajectories have a larger deviation from the desired value, which indicates a larger deviation, but others perform well.
In Fig.~\ref{fig:Gap}, all trajectories perform well in balancing visitation in 30 steps (a similar pattern appears in the rest 220 steps).
   

\begin{figure}[!h]
    \centering
    \includegraphics[width=\columnwidth]{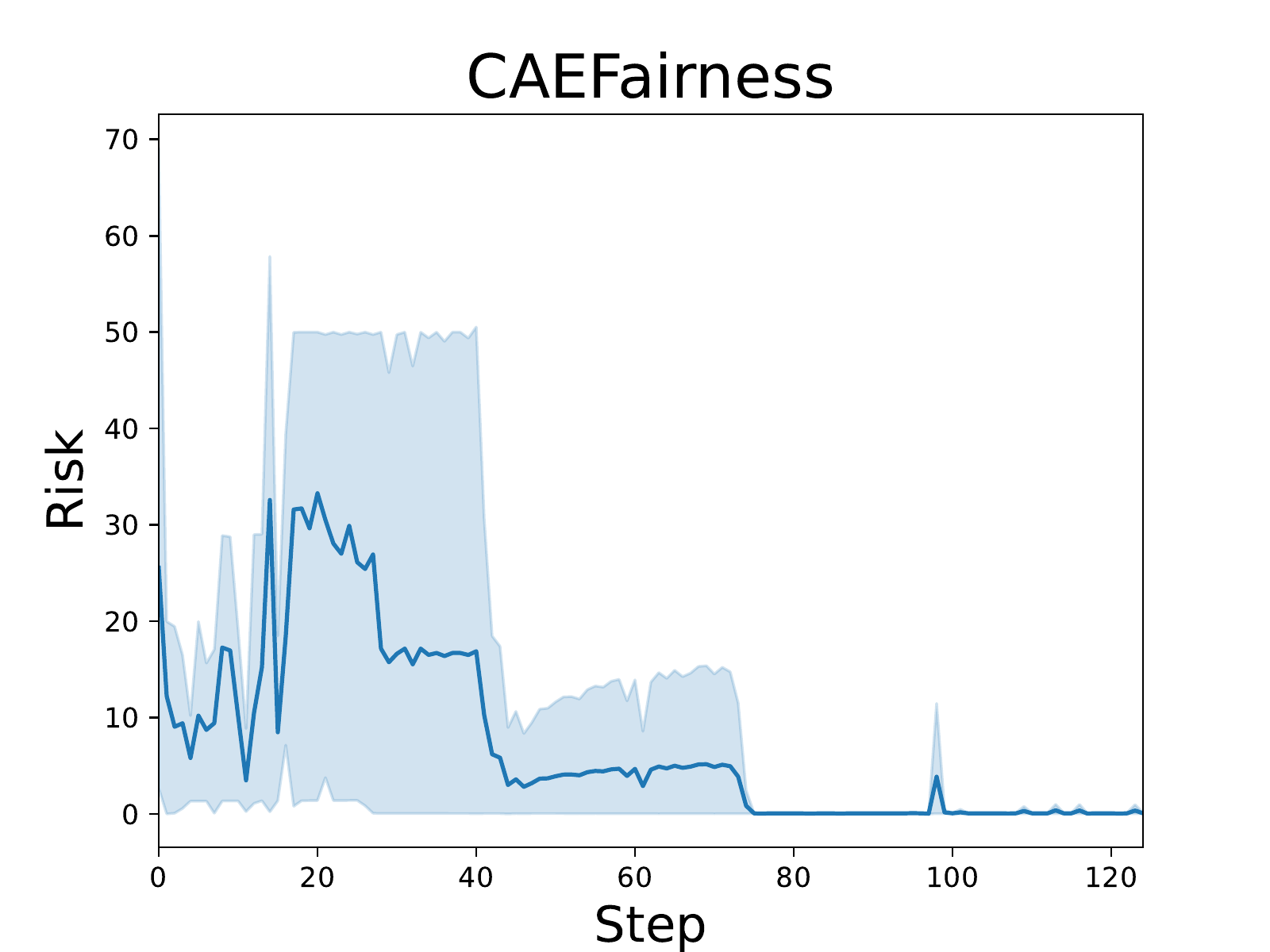}
    \caption{The error plot of our DBCE finding algorithm during learning. Take CAE-Fairness as an example. The error plot illustrates the process of the algorithm getting stabilization.}
    \label{fig:AlgPerf}
\end{figure}

\subsubsection*{Q3.} The selected result for this question is shown in Fig.~\ref{tab:RQ3}. There, we picked up the cases with comparable $\MaxRegret$ and $\MaxBF$ values, which means we drop the cases with large $\MaxRegret$ or $\MaxBF$ values;  notice that these numbers should be (close to) zero to claim that the algorithm found a CE with the computed error.
The exhaustive result is found in Tab.~\ref{tab:ExpResult}. 

In detail, DBCE has the smallest $\MaxRegret$ and $\MaxBF$ values in most cases. In {\it CaE-MinGap} case, DBCE and Cons005 have almost the same performance in $\MaxRegret$ and $\MaxBF$, but DBCE has a smaller error.

\subsubsection*{Q4.} The results for this question are found in Fig.~\ref{fig:AlgPerf}. We can observe the error-step line gradually gets stable. Fluctuations in the early stages vanish along with learning. We believe this early-stage fluctuation can be caused by the updating of Q-functions, and risk gets stable after the Q-functions get stable, which is reasonable since it's a policy-iteration process.

\begin{table}[!ht]
\caption{Explaintion on Comparable Instances}
			\label{tab:RQ3}
  \small
			\centering 
			\begin{tabular}{c  c  c  c  c}
				\toprule
				\multirow{2}{*}{Task} & \multirow{2}{*}{Method} & \multirow{2}{*}{$\MaxRegret$} & \multirow{2}{*}{$\MaxBF$} & \multirow{2}{*}{Error} \\
				& & & & \\
				\midrule 
				\multirow{3}{*}{Hunters-MinGap}
				& DBCE & \textbf{0.061} & \textbf{0} & 4.442\\
				& Cons005 & 0.52 & 0.001 & 2.283 \\
				& Cons5 & 1.18 & 0.032 & 2.174 \\
				\midrule 
				\multirow{2}{*}{FairGamble-MDCE}
				& DBCE & 0.164 & 0.464 & 1.225 \\
				& RewMod & \textbf{0.11} & \textbf{0.032} & 17.725 \\
				\midrule 
				\multirow{3}{*}{CaE-MinGap}
				& DBCE & \textbf{0.001} & \textbf{0} & 0.002 \\
				& Cons005 & 0.002 & \textbf{0} & 0.05 \\
				& Cons5 & 6.695 & 1.151 & 9.688 \\
				\midrule 
				\multirow{4}{*}{CaE-MDCE}
				& DBCE & \textbf{0.002} & \textbf{0} & 0.419 \\
				& Cons005 & 1.266 & 0.023 & 0.242 \\
				& Cons25 & 0.419 & 0.004 & 16.802 \\
				& RewMod & 1.434 & 0.023 & 23.838 \\
				\bottomrule
			\end{tabular}
	\end{table}

\section{Conclusion and Future Works}

In this paper, we propose a new concept of the correlated equilibrium, the Density-Based correlate equilibrium (DBCE).
It enables us to find joint policies that satisfy both reward requirements, i.e., equilibrium, and non-reward requirements characterised by state density functions. Different from existing methods, DBCE neither modifies the shape or size of feasible CE-set to a game nor suffers the parameter tuning problem. We connect density and reward by occupancy measure, and design Density-Based Correlated Policy Iteration (DBCPI) to compute DBCE. Experiments on various games prove the advantage of our method in finding desired CEs.
In future works, one may be interested in implementing parameterised version of DBCPI to solve more complex games with continuous state-action space games. Additionally, one density-based objective may lead to multiple points in the CE space, so further selection among those candidates can also be the next step.


\bibliographystyle{ACM-Reference-Format} 
\bibliography{refs}

\newpage
\newpage
\onecolumn
\section*{Appendix}

\subsubsection{More Experiment Results}

Here we show experimental results on 20 runs, to provide stronger statistical information.

\begin{table*}[ht]
\begin{tabular}{llllllllll}

\hline
Method         & Task               & \begin{tabular}[c]{@{}l@{}}Error\\ Mean\end{tabular} & \begin{tabular}[c]{@{}l@{}}Error\\ Std\end{tabular} & \begin{tabular}[c]{@{}l@{}}MaxBF\\ Mean\end{tabular} & \begin{tabular}[c]{@{}l@{}}MaxBF\\ Std\end{tabular} & \begin{tabular}[c]{@{}l@{}}MaxReg\\ Mean\end{tabular} & \begin{tabular}[c]{@{}l@{}}MaxReg\\ Std\end{tabular} & \begin{tabular}[c]{@{}l@{}}RunTime\\ Mean\end{tabular} & \begin{tabular}[c]{@{}l@{}}RunTime\\ Std\end{tabular} \\ \hline
cons005 & CaEMinGap         & 0.07      & 0.06     & 0.00      & 0.01     & 0.27       & 0.81      & 161.20      & 28.49      \\
cons25  & CaEMinGap         & 12.04     & 10.22    & 0.00      & 0.01     & 1.31       & 3.37      & 194.49      & 37.35      \\
DBCE           & CaEMinGap         & 4.27      & 8.39     & 0.05      & 0.13     & 0.31       & 0.68      & 255.07      & 42.78      \\ \hline
cons005 & CaEMDCE            & 0.48      & 0.66     & 0.05      & 0.08     & 3.49       & 7.28      & 256.63      & 33.06      \\
cons25  & CaEMDCE            & 13.93     & 10.78    & 0.70      & 3.02     & 0.58       & 1.09      & 171.41      & 27.09      \\
DBCE           & CaEMDCE            & 2.23      & 5.99     & 0.00      & 0.01     & 0.22       & 0.67      & 166.01      & 23.90      \\
ModRew         & CaEMDCE            & 20.70     & 16.81    & 0.42      & 1.75     & 0.07       & 0.13      & 208.10      & 21.92      \\ \hline
cons005 & CaEFreq-10        & 1.30      & 2.50     & 0.03      & 0.10     & 0.26       & 0.47      & 265.13      & 88.26      \\
cons25  & CaEFreq-10        & 10.84     & 5.91     & 0.00      & 0.01     & 0.07       & 0.21      & 211.06      & 33.89      \\
DBCE           & CaEFreq-10        & 11.35     & 4.78     & 0.04      & 0.14     & 0.03       & 0.05      & 203.83      & 44.49      \\ \hline
cons005 & FairGambleMinGap  & 3.07      & 4.34     & 10.84     & 8.03     & 0.18       & 0.17      & 150.41      & 20.72      \\
cons5   & FairGambleMinGap  & 8.50      & 6.00     & 6.75      & 3.85     & 0.15       & 0.07      & 177.02      & 53.26      \\
DBCE           & FairGambleMinGap  & 23.47     & 16.63    & 0.46      & 0.78     & 0.12       & 0.12      & 266.40      & 31.82      \\ \hline
cons005 & FairGambleMDCE     & 5.69      & 5.93     & 11.25     & 11.79    & 0.19       & 0.21      & 347.57      & 182.65     \\
cons5   & FairGambleMDCE     & 6.54      & 5.86     & 9.73      & 12.22    & 0.20       & 0.21      & 297.36      & 128.42     \\
DBCE           & FairGambleMDCE     & 8.05      & 14.31    & 0.19      & 0.34     & 0.15       & 0.34      & 370.30      & 36.37      \\
ModRew         & FairGambleMDCE     & 8.72      & 14.34    & 0.24      & 0.32     & 0.18       & 0.35      & 338.67      & 46.01      \\ \hline
cons005 & FairGambleFreq-10 & 5.68      & 1.57     & 13.47     & 10.93    & 0.34       & 0.30      & 130.20      & 38.12      \\
cons5   & FairGambleFreq-10 & 6.17      & 1.52     & 8.04      & 9.80     & 0.22       & 0.19      & 195.69      & 67.60      \\
DBCE           & FairGambleFreq-10 & 9.63      & 7.32     & 0.42      & 0.94     & 0.07       & 0.07      & 322.61      & 43.02      \\ \hline
cons005 & HuntMinGap        & 1.30      & 1.89     & 0.00      & 0.01     & 1.69       & 1.07      & 153.83      & 8.26       \\
cons5   & HuntMinGap        & 1.04      & 1.57     & 0.01      & 0.04     & 1.32       & 1.31      & 154.30      & 12.01      \\
DBCE           & HuntMinGap        & 2.76      & 4.35     & 0.90      & 2.94     & 0.14       & 0.35      & 397.15      & 666.03     \\ \hline
cons005 & HuntMDCE           & 1.20      & 0.73     & 0.04      & 0.13     & 4.33       & 1.78      & 525.69      & 131.39     \\
cons5   & HuntMDCE           & 2.29      & 2.44     & 0.02      & 0.07     & 1.05       & 1.50      & 162.71      & 8.36       \\
DBCE           & HuntMDCE           & 11.49     & 7.07     & 0.01      & 0.03     & 0.03       & 0.05      & 206.20      & 40.72      \\
ModRew         & HuntMDCE           & 1.67      & 1.62     & 4.31      & 18.43    & 2.81       & 1.75      & 200.18      & 14.53      \\ \hline
cons005 & HuntFreq-10       & 2.59      & 4.67     & 0.02      & 0.05     & 0.91       & 1.20      & 435.08      & 37.31      \\
cons5   & HuntFreq-10       & 5.41      & 1.24     & 0.14      & 0.47     & 0.70       & 1.40      & 257.46      & 27.31      \\
DBCE           & HuntFreq-10       & 7.69      & 5.26     & 0.00      & 0.00     & 0.07       & 0.09      & 298.59      & 39.70      \\ \hline
\end{tabular}
\end{table*}

Below we show the error plots of 2 baselines in Fig.~\ref{fig:newErrPlts}, which can be understood in a similar way as Fig.~\ref{fig:AlgPerf}.

\begin{figure*}[!htbp]
 \centering
 \subfloat[FairGamble-MDCE]{
  \includegraphics[width=0.4\columnwidth]{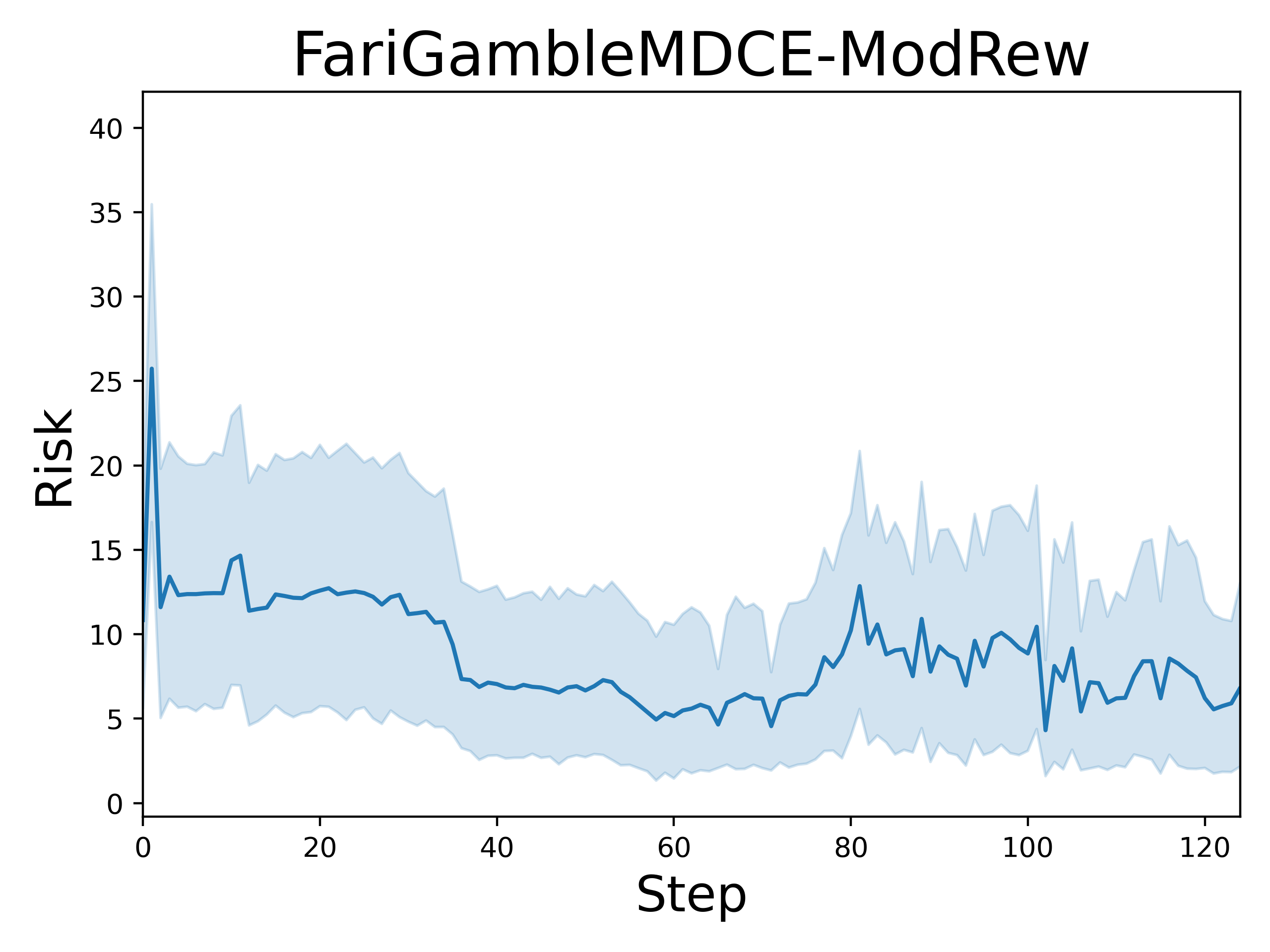}
  \label{fig:modrewErrfig}
 }
 \subfloat[CaE-MinGap]{
  \includegraphics[width=0.4\columnwidth]{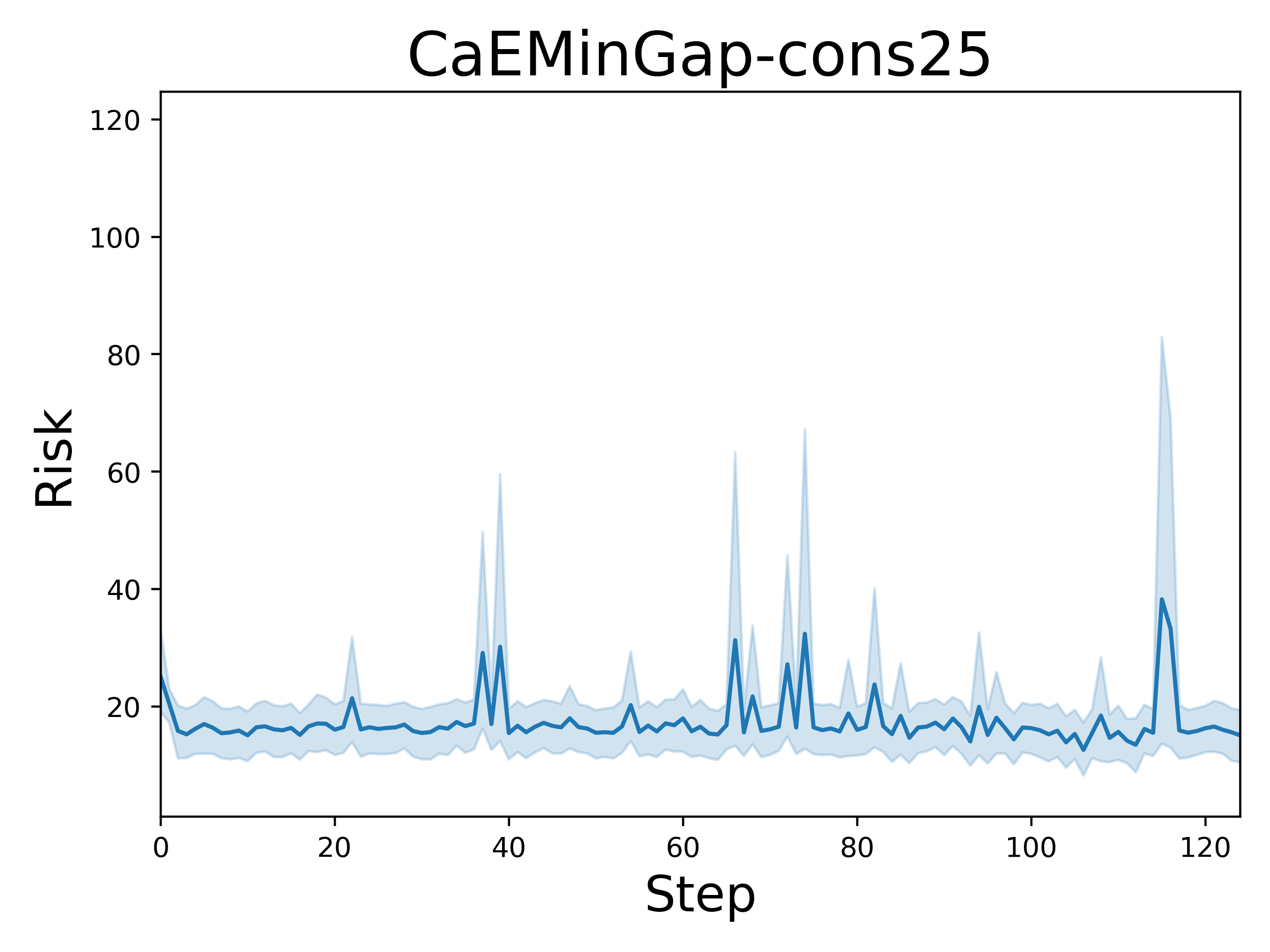}
  \label{fig:cons25Errfig}
 }
\caption{The error plot of constrained-25 and ModRew}
\label{fig:newErrPlts}
\end{figure*}

\section{Details of Game Models}
The details of models are as follows. 

    \begin{enumerate}
        \item {\it Fair Gamble}. 
        In this game, two gamblers play games with each other, and they choose from 3 different games. 3 games are extremely fair, so no matter what they do, the reward will be given randomly. Game 1 gives 0 rewards fairly; Game 2 gives 0.5 to a gambler and -0.5 to another; Game 3 gives 1 to a gambler and -1 to another.

        Each round, the gamblers choose a number from 0,1,2 and we compare the number to select which game they play. See figure\ref{fig:FairGamble} for the explanation.

        \item {\it Hunters}. 
        In this game, 3 hunters live in one village. In each round, they are inside the village or outside the village, and they can choose between going hunting or guarding the village from the animals, see Figure\ref{fig:Hunters}. If one hunter goes hunting from the village, the hunter will get a high reward (1) and the rest of the hunters will get a low reward (0.1). If one hunter guards the village, all hunters will get the same mid-level reward (0.5). If one hunter is outside of the village and still hunts outside, the behaviour is not safe enough, so the reward he gains is smaller (0.5), while the others will get a punishment reward (-0.5). Additionally, if there is less or equal than 1 hunter guarding the village, they will receive high punishment reward (-3).
        
        \item {\it Collect and Explore}. 
        This is a cooperative game. In this environment, 3 agents are trapped in a forest, see Figure \ref{fig:CaE}. They can choose to explore the environment or collect some foods nearby their accommodation. In each round, only one agent can go out to explore the environment. If more than one agent chooses to go out, we randomly choose one of them to go out, and the others will get no rewards. In each round, if existing agents go exploration, we add 1 to the reward; if any agent collects foods nearby, we add 0.3 to the reward. Since it's a cooperative environment, we set the same reward for all agents.
        
    \end{enumerate}
\end{document}